\documentclass{article}

\usepackage{microtype}
\usepackage{graphicx}
\usepackage{subcaption}
\usepackage{booktabs} %

\usepackage{hyperref}

\usepackage{xurl}

\usepackage[preprint]{icml2026}

\usepackage{amsmath}
\usepackage{amssymb}
\usepackage{mathtools}
\usepackage{amsthm}

\usepackage[capitalize,noabbrev]{cleveref}

\usepackage{thm-restate}

\theoremstyle{plain}
\newtheorem{theorem}{Theorem}[section]

\theoremstyle{definition}

\newtheorem{claim}{Claim}
\theoremstyle{remark}
\newtheorem{remark}[theorem]{Remark}

\DeclareMathOperator*{\argmin}{arg\,min}   %
\DeclareMathOperator*{\argmax}{arg\,max}   %

\newcommand{\obs}{s}
\newcommand{\nobs}{s^\prime}
\newcommand{\act}{a}
\newcommand{\actidm}{{\hat a}}
\newcommand{\latact}{z}
\newcommand{\DU}{{\mathcal{D}_{U}}} %
\newcommand{\DL}{{\mathcal{D}_{L}}} %

\newcommand{\NU}{{N_U}} %
\newcommand{\NL}{{N_L}} %

\newcommand{\bcpolicy}{\hat\pi_\textnormal{BC}}

\newcommand{\latent}[1]{\Tilde{#1}}
\newcommand{\policy}{\pi}
\newcommand{\expert}{\policy^*}

\newcommand{\idmpolicy}{{\hat\pi_{\video^*, \hat\idm}}}
\newcommand{\idm}{h}
\newcommand{\fdm}{f}

\newcommand{\video}{v}

\newcommand{\policyclass}{\Pi}
\newcommand{\idmclass}{\mathcal{H}}

\usepackage{comment}
\usepackage{enumitem}
\usepackage{multirow}
\usepackage{tabularray}
\usepackage{makecell}
\usepackage{tikz}
\usepackage[normalem]{ulem}

\usepackage{amsmath,amsfonts,bm}
\usepackage{bbm}
\usepackage{mathtools}

\renewcommand{\mathbbm}[1]{{\bm{#1}}}

\def\1{\bm{1}}

\DeclareMathAlphabet{\mathsfit}{\encodingdefault}{\sfdefault}{m}{sl}
\SetMathAlphabet{\mathsfit}{bold}{\encodingdefault}{\sfdefault}{bx}{n}

\def\gA{{\mathcal{A}}}

\def\gD{{\mathcal{D}}}

\def\gH{{\mathcal{H}}}

\def\gS{{\mathcal{S}}}

\def\gZ{{\mathcal{Z}}}

\def\sE{{\mathbb{E}}}

\def\sR{{\mathbb{R}}}

\newcommand{\softmax}{\mathrm{softmax}}

\newcommand{\BlackBox}{\rule{1.5ex}{1.5ex}}  %
\ifdefined\proof
    \renewenvironment{proof}{\par\noindent{\bf Proof\ }}{\hfill\BlackBox\\[2mm]}
\else
    \newenvironment{proof}{\par\noindent{\bf Proof\ }}{\hfill\BlackBox\\[2mm]}
\fi

\DeclarePairedDelimiterX{\infdivx}[2]{(}{)}{%
  #1\;\delimsize\|\;#2%
}
\newcommand{\KL}{D_{KL}\infdivx}

\icmltitlerunning{On the Sample Efficiency of Inverse Dynamics Models for Semi-Supervised Imitation Learning}

\begin{document}

\twocolumn[
  \icmltitle{On the Sample Efficiency of Inverse Dynamics Models for \\ Semi-Supervised Imitation Learning %
}

  \icmlsetsymbol{equal}{*}
 \icmlsetsymbol{intern}{*} %

  \begin{icmlauthorlist}
    \icmlauthor{Sacha Morin}{diro,mila,intern}
    \icmlauthor{Moonsub Byeon}{samsungelectronics}
    \icmlauthor{Alexia Jolicoeur-Martineau}{samsung}
    \icmlauthor{Sébastien Lachapelle}{samsung}
  \end{icmlauthorlist}

  \icmlaffiliation{diro}{Département d'informatique et de recherche opérationnelle, Université de Montréal}
  \icmlaffiliation{mila}{Mila – Quebec AI Institute}
  \icmlaffiliation{samsungelectronics}{Samsung Electronics, Suwon}
  \icmlaffiliation{samsung}{Samsung AI Lab, Montreal}

  \icmlcorrespondingauthor{Sacha Morin}{sacha.morin@umontreal.ca}

  \icmlkeywords{Machine Learning, ICML}

  \vskip 0.3in
]

\printAffiliationsAndNotice{\textsuperscript{*}Work performed during an internship at Samsung AI Lab, Montreal.}  %

\begin{abstract}
Semi-supervised imitation learning (SSIL) consists in learning a policy from a small dataset of action-labeled trajectories and a much larger dataset of action-free trajectories. Some SSIL methods learn an inverse dynamics model (IDM) to predict the action from the current state and the next state. An IDM can act as a policy when paired with a video model (VM-IDM) or as a label generator to perform behavior cloning on action-free data (IDM labeling). 
In this work, we first show that VM-IDM and IDM labeling learn the same policy in a limit case, which we call the IDM-based policy. 
We then argue that the previously observed advantage of IDM-based policies over behavior cloning is due to the superior sample efficiency of IDM learning, which we attribute to two causes: (i)~the ground-truth IDM tends to be contained in a lower complexity hypothesis class relative to the expert policy, and (ii)~the ground-truth IDM is often less stochastic than the expert policy. 
We argue these claims based on insights from statistical learning theory and novel experiments, including a study of IDM-based policies using recent architectures for unified video-action prediction (UVA). 
Motivated by these insights, we finally propose an improved version of the existing LAPO algorithm for latent action policy learning. We experiment on the Procgen, Push-T and LIBERO benchmarks.
\end{abstract}

\section{Introduction}
\textit{Behavior cloning} (BC)~\cite{pomerleau1988bc} is a capable technique for learning control policies via supervised learning on action-labeled expert trajectories in both state-based and visual domains. Following the successes in natural language processing and computer vision, applying BC to large datasets is seen as a promising avenue for learning general policies in fields such as robotics~\cite{o2024open}. However, scaling datasets for BC requires collecting action-labeled expert demonstrations, a particularly onerous process in domains requiring human demonstrations.

There is therefore a strong interest in leveraging data that is already available ``in the wild", typically in the form of videos. Videos tend to be goal-directed and depict quasi-expert or expert behaviors that can be leveraged for policy learning. Such data, however, does not include action labels. Learning from abundant action-free data and a comparatively small dataset of action-labeled data is called \textit{semi-supervised imitation learning} (SSIL)~\cite{vpt}.

A number of high-performing methods for SSIL leverage an \textit{inverse dynamics model} (IDM) to predict actions from the current and next observations. An IDM can be trained on the small, action-labeled dataset and used to synthetically label action-free data for downstream BC (IDM labeling)~\cite{vpt}. Other methods form a policy by instead coupling the IDM with a \textit{video model}~(VM) trained on unlabeled data (VM-IDM)~\cite{unipi}. The IDM also plays an important roles in recent methods for latent action learning~\cite{schmidt2024learning}.

To effectively leverage action-free data, IDM-based SSIL methods make the assumption that the IDM will generalize better than BC trained on the same amount of labeled data. While the sample efficiency of the IDM has been measured empirically, previous works only hypothesize partial explanations for this phenomenon, ranging from the IDM being non-causal and simpler~\cite{vpt} to analogies with the sample-efficiency of model-based reinforcement learning~\cite{torabi2018bco}.

In this work, we unify IDM-based methods and seek a more complete explanation for their success in SSIL settings. We argue that the sample efficiency of IDM learning stems from the reduced complexity and stochasticity of the ground truth IDM compared to the ground truth policy. These factors are environment-specific and offer a valuable framework for understanding when and to what extent IDM-based learning can outperform BC. Experimentally, we explore IDM-based learning across multiple datasets and show how the statistical advantage of IDM learning can be leveraged to improve other IDM-based SSIL methods. Our key contributions are

\setlist{nolistsep}
\begin{enumerate}[noitemsep, leftmargin=*]
    \item We show that, at optimality, VM-IDM and IDM labeling recover the same policy when the unlabeled dataset is infinite and model capacity is sufficient. We call this common policy the \uline{IDM-based policy} (Section~\ref{sec:connecting}).
    \item We argue that IDM learning will be more sample efficient than BC when: (i)~the ground-truth IDM is contained in a \uline{lower complexity} hypothesis class relative to the expert policy, and/or (ii)~the ground-truth IDM is \uline{less stochastic} than the expert policy. We use insights from statistical learning theory and experiments to argue these claims (Section~\ref{sec:stat_advantage_idm}). We also hypothesize that at least one of these two situations is likely to occur in practical settings.
    
    \item We present an extensive and novel comparison across the 16 Procgen environments, Push-T and LIBERO, and discuss how IDM properties correlate with the increased performance of IDM-based policies over BC (Section~\ref{sec:experiments}).
    \item Motivated by the sample-efficiency of IDM learning, we propose an improved version of the \uline{LAPO algorithm} for latent action policy learning~\cite{schmidt2024learning} and demonstrate its superiority on the Procgen benchmark (Section~\ref{subsec:procgen}). We further show how sampling a recent architecture for unified video-action prediction (\uline{UVA}) as a VM-IDM can improve policy success (Section~\ref{subsec:manipulation}).
\end{enumerate}
Code for all our experiments is available \href{https://github.com/sachaMorin/idm-ssil}{here}.

\newcommand{\celllabel}[2]{\overset{\raisebox{4pt}{\text{\scriptsize\itshape #1}}}{#2}}
\newcommand{\tinyminus}{\scalebox{0.75}[1.0]{\(\,-\,\)}}
\definecolor{labeled}{RGB}{207, 226, 243}
\definecolor{unlabeled}{RGB}{255, 255, 255}

\begin{table*}[ht]
    \centering
    \resizebox{\textwidth}{!}{%
    \begin{tblr}{
  colspec  = {c||XXXXXX},
  rows    = {m},
  cell{1}{2} = {c=6}{bg=labeled, c},        
  cell{2}{2} = {r=2, c=3}{bg=labeled, c},        
  cell{2}{5} = {c=3}{bg=unlabeled, c},
  cell{3}{5} = {c=3}{bg=unlabeled, c},
  cell{4}{2} = {r=2,c=2}{bg=unlabeled, c},
  cell{4}{4} = {c=2}{bg=unlabeled, c},
  cell{4}{6} = {c=2}{bg=labeled, c},
  cell{5}{4} = {c=2}{bg=labeled, c},
  cell{5}{6} = {c=2}{bg=unlabeled, c},
  hlines, vlines
}
  \makecell[c]{\textbf{Behavior} \\ \textbf{cloning}} & ${\displaystyle\min_{\hat \pi}} -\frac{1}{N_L}\sum_{i=1}^{N_L} \log \hat \pi(a_i \mid s_i)$ &  &    &    &    & \\
  \textbf{VM-IDM}            &$\celllabel{(1) IDM learning}{{\displaystyle\min_{\hat h}} \tinyminus\frac{1}{N_L}\sum_{i=1}^{N_L} \log \hat h(a_i \mid s_i, s'_i)}$ &  &    & $\celllabel{(2) VM learning}{{\displaystyle\min_{\hat v}}\tinyminus\frac{1}{N} \sum_{i=1}^N \log \hat v(s'_i \mid s_i)}$ &    & \\
  \makecell[c]{\textbf{IDM} \\ \textbf{labeling}}      &    &  &    & $\celllabel{(2) IDM labeling}{{\displaystyle\min_{\hat \pi}}\tinyminus\frac{1}{N} \sum_{i=1}^N \mathbb{E}_{\hat h(a \mid s_i, s'_i)}\log \hat\pi(a \mid s_i)}$ &    & \\
  \textbf{LAPO}              & $\celllabel{(1) Reconstruction}{{\displaystyle\min_{\tilde f, \tilde h}}\frac{1}{N}\sum_{i=1}^{N}\lVert s'_i \tinyminus \tilde f(s_i, \tilde h(s_i, s'_i))  \rVert^2}$ &  &  $\celllabel{(2) LIDM labeling}{{\displaystyle\min_{\tilde \pi}} \frac{1}{N}\sum_{i=1}^{N}\lVert \tilde h(s_i, s'_i) \tinyminus \tilde \pi(s_i) \rVert^2}$ &    & $\celllabel{(3) Latent policy decoding}{{\displaystyle\min_\phi} \tinyminus\frac{1}{N_L}\sum_{i=1}^{N_L} \log\phi(a_i \mid \tilde\pi(s_i))}$ & \\
  \makecell[c]{\textbf{LAPO+} \\ (Ours)}             &    &  & $\celllabel{(2) LIDM decoding}{{\displaystyle\min_\phi} \tinyminus\frac{1}{N_L}\sum_{i=1}^{N_L} \log\phi(a_i \mid \tilde h(s_i, s'_i))}$ &    & $\celllabel{(3) IDM labeling}{{\displaystyle\min_{\hat\pi}} \tinyminus \frac{1}{N} \sum_{i=1}^N \mathbb{E}_{\phi(a \mid \tilde h(s_i, s'_i))}\log \hat\pi(a \mid s_i)}$ & \\
  \end{tblr}}
  \vspace{0.1cm}
    \caption{Summarizing BC and the four IDM-based semi-supervised imitation learning (SSIL) methods covered in this work, including our contribution, LAPO+ (Section~\ref{subsec:procgen}). SSIL methods leverage action-labeled transitions $(s, a, s')$ from a small dataset $\DL$ (blue-shaded cells) and a typically much larger dataset $\DU$ of unlabeled transitions $(s, s')$ (white cells) to learn policies. BC, IDM labeling and LAPO+ directly parametrize a policy $\hat \pi$. The VM-IDM policy sequentially samples from $\hat v$ then $\hat h$ and the LAPO policy is $\phi(\act \mid \latent{\pi}(s))$.
    }
    \vspace{-0.5cm}
    \label{tab:summary}
\end{table*}

\section{Background \& Related Work}
\subsection{Finite-Horizon Markov Decision Processes (MDP)}
We consider a Markov decision process (MDP) with finite-horizon $T$, state space $\gS$ (e.g. positions or images), initial state distribution $p^1(s^1)$, action space $\gA$, stationary transition kernel ${p(\nobs \mid \obs, \act)}$ and reward function $r(\obs, \act, \nobs)$~\citep{Sutton1998}. Given a stationary policy $\pi(a \mid s)$, let $d_\pi(\tau)$ be the distribution over sequences $\tau := (\obs^1, \act^1, \obs^2, \act^2, \dots, \obs^T, \act^T, \obs^{T+1})$ obtained by following policy $\pi$ at each iteration.  Let $p_\pi^t(\obs)$ be the distribution of the $t$th state $\obs^t$ when following $\pi$. We can then define the \textit{state visitation distribution} as $p_\pi(\obs) := \frac{1}{T}\sum_{t=1}^T p^t_\pi(\obs)$. We also define the \textit{transition visitation distribution} as 
\begin{align}
    p_\pi(\obs, \act, \nobs) := {p_\pi(\obs)\pi(\act \mid \obs)p(\nobs \mid \obs, \act)}\, . \label{eq:transition_dist}
\end{align}
A sample from the above corresponds to (i) sampling a trajectory $\tau$ from $d_\pi$, (ii) sampling with uniform probability a time step $t\in \{1, \dots, T\}$, and (iii) outputting $(\obs^t, \act^t, \obs^{t+1})$. Let $r(\pi) := \sE_{d_\pi(\tau)} \sum_{t=1}^T r(\obs^t, \act^t, \obs^{t+1})$ be the total expected reward achieved by $\pi$. %
\subsection{Behavior Cloning (BC)}\label{sec:behavior_cloning}
In BC, we are given a dataset of state-action pairs $\DL:=\{(\obs_i, \act_i)\}_{i=1}^\NL$ sampled from $p_{\expert}$ where $\expert$ is an expert policy achieving high reward. BC consists in learning a policy $\hat\pi$ via supervised learning on the dataset $\DL$ \cite{pomerleau1988bc}. Given an hypothesis class $\policyclass$ of policies, a standard approach is to minimize the cross-entropy:
\begin{align}\label{eq:bc}
    \bcpolicy \in \argmin_{\pi \in \policyclass} - \frac{1}{\NL}\sum_{i=1}^\NL \log \pi(\act_i \mid \obs_i)\,.
\end{align}
The hope is that, given enough samples, we will have that $\bcpolicy \approx \expert$ so that $r(\bcpolicy) \approx r(\expert)$.

\subsection{IDM-Based Semi-Supervised Imitation Learning} %
\label{sec:ssil}

The fact that expert state-action pairs are usually costly to acquire motivated the development of methods that can also leverage action-free transitions of the form $(\obs, \nobs)$, which are typically much more abundant. In this setting, we are given a small dataset of action-labeled transitions $\DL := \{(\obs_i, \act_i, \nobs_i)\}_{i=1}^{\NL}$ as well as a very large dataset of unlabeled transitions $\DU := \{(\obs_i, \nobs_i)\}_{i=\NL + 1}^{\NL + \NU}$, assumed to have been sampled i.i.d. from $p_{\expert}(\obs, \act, \nobs)$, defined in \eqref{eq:transition_dist}. We refer to this setting as \textit{semi-supervised imitation learning} (SSIL)~\cite{vpt}. While our work focuses on learning offline from fixed datasets, we note a number of similar settings where $\DL$ is built online through environment interactions~\cite{torabi2018bco, radosavovic2021state}. The broad field of learning from action-free expert data is often discussed under the umbrella term \textit{Learning from observations (LfO)}~\cite{torabi2019recent}.

We now review three methods for SSIL that rely on learning an \textit{inverse dynamics model} (IDM) which, given a pair of consecutive states, predicts the action taken. These methods are compared in Table~\ref{tab:summary}.

\subsubsection{VM-IDM Learning}\label{sec:vm_idm}
One approach to leverage this additional unlabeled dataset $\DU$ is to train a video model (VM) and combine it with an IDM to define a policy. 

First, a VM $\hat \video(\nobs \mid \obs)$ is trained to minimize the next state prediction loss $-\frac{1}{N} \sum_{i=1}^N \log \hat\video(\nobs_i \mid \obs_i)$, where $N := {\NL + \NU}$, thus using all available $(\obs_i, \nobs_i)$. Second, an IDM ${\hat\idm(a \mid \obs, \nobs)}$ is trained to predict the action $\act_i$ from the transition $(\obs_i, \nobs_i)$, using the small action-labeled dataset $\DL$:
\begin{align}
    \hat\idm \in \argmin_{\idm \in \idmclass} -\frac{1}{N_L}\sum_{i=1}^{N_L} \log \idm(\act_i \mid \obs_i, \nobs_i) \, , \label{eq:idm_MLE}
\end{align}
where $\idmclass$ is an hypothesis class of IDMs. These two models define the \textit{VM-IDM policy} from which, given a state $s$, we can sample as follow: (i) sample the next-state $\hat s'$ from the VM $\hat\video(\nobs \mid \obs)$, and (ii) sample $\hat\act$ from the IDM $\hat\idm(\act \mid \obs, \hat s')$.

VM-IDM policies are also known as \textit{Video-Action Models (VAM)}~\cite{mimic2025pai} and have recently been described as \textit{IDM-style Policies} with the VM acting as a world model~\cite{hou2026world}. The modular nature of VM-IDM allows to use different datasets or pretrained models for learning/defining $\hat\video$ and $\hat\idm$. For $\hat\video$, a core objective is to achieve increased generalization by leveraging internet-scale action-free video data, either to train $\hat\video$ directly or fine-tune existing pretrained video generative models~\cite{unipi, liang2024dreamitate, hu2024vpp,liang2024dreamitate, liang2025videogenerators,  mimic2025pai}. In cases where domain knowledge of the ground-truth IDM is available, methods can leverage specialized pretrained models and algorithms to define $\hat\idm$. Examples include combining pretrained models for vision tasks (e.g., object tracking, optical flows) and inverse kinematics for robot manipulation~\cite{ko2023avdc, liang2024dreamitate}. Additional considerations include interpretability~\cite{liang2024dreamitate}, flexible goal specification through text-conditioning of the VM, and independence of the VM from a specific action space~\cite{unipi, ko2023avdc}. VM-IDM policies can also be trained end-to-end, as shown in the case of \textit{Predictive Inverse Dynamics Models}~(PIDM)~\cite{tian2024seer}.

\subsubsection{IDM Labeling}\label{sec:idm_labeling}
Another strategy is to train a policy $\hat\pi$ to predict the output of the trained IDM $\hat\idm$ from \eqref{eq:idm_MLE}. This is achieved by minimizing the cross-entropy loss $-\frac{1}{N} \sum_{i=1}^N \sE_{\hat\idm(\act \mid \obs_i, \nobs_i)} \log \hat\pi(a \mid \obs_i)$ w.r.t. $\hat\pi$ alone ($\hat\idm$ is kept frozen), where all available pairs $(\obs_i, \nobs_i)$ are used. This can also be understood as labeling the unlabeled transitions $(\obs_i, \nobs_i)$ with actions sampled from the IDM, i.e. $\actidm_i \sim \hat\idm(\act \mid \obs_i, \nobs_i)$, and finally training a policy via BC on the newly labeled transitions. 

 This simple 2-step formulation of IDM labeling was introduced and shown to work on Minecraft in VPT~\cite{vpt} using contractor data for $\DL$ and unlabeled online gameplay videos for $\DU$.  Variants of IDM labeling have been successfully applied to autonomous driving~\cite{zhang2022aco}, computer-use agents~\cite{lu2025videoagenttrek}, and generated robot videos~\cite{jang2025dreamgen}. 
 
\subsubsection{Latent Action Policies}\label{sec:lapo}

The IDM-based SSIL methods we introduced so far only train the IDM $\hat h$ on the action-labeled $\DL$. Leveraging the larger unlabeled $\DU$ to learn IDMs has been the subject of recent research efforts leveraging \textit{latent actions}~\cite{edwards19ilpo, ye2022become, bruce2024genie, ye2024lapa}. We specifically focus on the three-stage offline LAPO algorithm~\cite{schmidt2024learning}. 

In a \textbf{first stage}, LAPO pretrains jointly a \textit{latent inverse dynamics model} (LIDM) $\latact = \latent{\idm}(\obs, \nobs)$ and a \textit{latent forward dynamics model} (LFDM) $\hat\obs' = \latent{\fdm}(\obs, \latact)$ to minimize the reconstruction loss $\frac{1}{N}\sum_{i=1}^{N}\lVert \nobs_i - \tilde f(\obs_i, \tilde\idm(\obs_i, \nobs_i))  \rVert^2$. The variable $\latact = \latent{\idm}(\obs, \nobs)$ in $\mathcal{Z}$ is meant to be a continuous latent representation of the unobserved action. To induce an information bottleneck, %
LAPO additionally applies vector quantization to $\latact$ before passing it to the LFDM~\cite{bengio2013estimating, van2017neural}.

In a \textbf{second stage}, a \textit{latent action policy} $\latent{\pi}(\obs) \in \gZ$ is trained to predict the output of the LIDM by minimizing $\frac{1}{N}\sum_{i=1}^{N}\lVert \latent\idm(s_i, s'_i) - \latent\policy(\obs_i)  \rVert^2$ w.r.t. $\latent\pi$ keeping $\latent\idm$ frozen. Pre-quantized latent actions are used during this stage. 

The \textbf{third stage} decodes the output of $\latent\pi(s)$ from the latent action space $\mathcal{Z}$ to the true action space $\gA$ by minimizing $-\frac{1}{N_L}\sum_{i=1}^{N_L} \log\phi(\act_i \mid \latent\pi(\obs_i))$ w.r.t. to a decoding head ${\phi(a \mid z)}$ keeping $\latent\pi$ frozen. The composition $\phi \circ \latent\policy$ forms the final policy $\hat\policy$.

We observe that the second stage of LAPO effectively implements ``latent IDM labeling" in the latent action space $\mathcal{Z}$, which then provides a fixed pretrained backbone to perform BC during stage three. We will take advantage of this perspective to propose an empirically superior version of LAPO in Section~\ref{subsec:procgen}. Before this, we will spend time connecting the VM-IDM and IDM labeling approaches (Section~\ref{sec:connecting}) and discussing particular factors contributing to the sample efficiency of IDM learning in general (Section~\ref{sec:stat_advantage_idm}).

\section{Connecting VM-IDM and IDM labeling}
\label{sec:connecting}

In this section, we show that, in the limit of infinitely many unlabeled transitions, VM-IDM and IDM labeling find the same policy, which we call the \textit{IDM-based policy}. We then show that, when we have infinitely many labeled samples, the IDM-based policy recovers the expert $\pi^*$, just like BC. 

We first note that the VM-IDM policy described in Section~\ref{sec:vm_idm} can be written as 
\begin{align} \label{eq:policy=vm+idm}
\hat\pi_{\hat\video, \hat\idm}(\act \mid \obs) := {\int \hat\idm(\act \mid \obs, \nobs)\hat\video(\nobs \mid \obs) d\nobs} \,,
\end{align}
where $\hat \video$ and $\hat \idm$ are the learned VM and IDM, respectively. To see this, observe that the sampling procedure from Section~\ref{sec:vm_idm} corresponds to sampling from $\hat\idm(a \mid s, s')\hat\video(s' \mid s)$ via \textit{ancestral sampling} \citep[Section 8.1.2]{bishop2007}. Moreover, the integral marginalizing out $s'$ reflects the fact that $s'$ is discarded once the action is sampled. 

\textbf{Equivalence between VM-IDM and IDM labeling.}
In the VM-IDM method with infinitely many unlabeled pairs $(\obs, \nobs)$, the VM learning objective stated in Section~\ref{sec:vm_idm} becomes $-\sE_{p_{\expert}(\obs)}\sE_{p_{\expert}(\nobs \mid \obs)} \log \hat\video(\nobs \mid \obs)$, 
which has the same minimizers as 
\begin{align*}
    &\ \sE_{p_{\expert}(\obs)}\sE_{p_{\expert}(\nobs \mid \obs)} \left[\log p_{\expert}(\nobs \mid \obs)-\log \hat\video(\nobs \mid \obs) \right] \\
    =&\ \sE_{p_{\expert}(\obs)} \KL{p_{\expert}(\nobs \mid \obs)}{\hat\video(\nobs \mid \obs)} \, .
\end{align*}
It is itself minimized precisely when $\hat\video(\nobs \mid \obs)$ equals the ground-truth VM induced by the expert, defined by ${\video^*(\nobs \mid \obs)} := p_{\expert}(\nobs \mid \obs)$. Note that $\video^*(\nobs \mid \obs)$ can be derived from $p_{\expert}(\obs, \act, \nobs)$ using the definition of a conditional probability distribution/density. The policy obtained by combining this ground-truth VM $\video^*$ together with the learned IDM $\hat\idm$ is thus $\hat\pi_{\video^*, \hat\idm}$, reusing the notation from \eqref{eq:policy=vm+idm}. 

In IDM labeling with infinitely many unlabeled transitions, the learning objective from Section~\ref{sec:idm_labeling} becomes 
$${-\sE_{p_{\pi^*}(\obs)\video^*(\nobs \mid \obs)\hat\idm(\act \mid \obs, \nobs)} \log \hat\pi(a \mid \obs)}\, ,$$
which is equal to ${-\sE_{p_{\pi^*}(\obs)\idmpolicy(\act \mid \obs)} \log \hat\pi(a \mid \obs)}$. By a similar KL argument, it is minimized precisely when $\hat\pi = \hat\pi_{\video^*, \hat\idm}$.

The last two paragraphs have showed that the policies learned by the VM-IDM approach and by IDM labeling are exactly the same and given by $\idmpolicy$ when (i) we have infinitely many unlabeled pairs $(\obs, \nobs)$, (ii) the hypothesis class for $\hat v$ and $\hat \pi$ are expressive enough, and (iii) optimization finds a global minimizer. We refer to $\idmpolicy$ as the \textit{IDM-based policy}.

\textbf{Consistency of the IDM-based policy.} We know that with infinitely many action-labeled samples $(\obs, \act, \nobs)$ and sufficient capacity, the BC policy $\bcpolicy$ becomes equal to the expert policy $\expert$ by the same KL argument used in the previous section. It is easy to see that this is also the case for the IDM-based policy. First, let us define the \textit{ground-truth IDM induced by $\expert$} as ${\idm^*(a \mid \obs, \nobs)} := p_{\expert}(a \mid \obs, \nobs)$, which can be derived from $p_{\expert}(\obs, \act, \nobs)$, defined in \eqref{eq:transition_dist}. With infinite labeled data and sufficient capacity, we can show via a routine KL argument that IDM learning recovers the ground-truth, i.e. $\hat\idm = \idm^*$. Consequently, $\hat\idm(\act \mid \obs, \nobs)\video^*(\nobs \mid \obs) = {p_{\pi^*}(\act \mid \obs, \nobs)p_{\pi^*}(\nobs \mid \obs)} = p_{\pi^*}(\act, \nobs \mid \obs)$. By integrating w.r.t. $s'$, we get $p_{\pi^*}(\act \mid \obs) = \expert(\act \mid \obs)$. In other words, the IDM-based policy learns the expert when given infinite action-labeled data. Note that we just showed $\hat\policy_{\video^*, \idm^*} = \expert$.

Thus, BC, VM-IDM and IDM labeling are consistent procedures to estimate the expert policy $\expert$. However, we will see that IDM-based policy learning can be much more label-efficient than BC. 

\section{Understanding IDM-based policies}
\label{sec:stat_advantage_idm}

\citet{vpt} argued that (i) IDM learning (Equation~\ref{eq:idm_MLE}) is more sample efficient than BC and that (ii) this contributes to the superiority of IDM labeling policies against BC in SSIL. %
In this section, we begin by formalizing (i) and (ii), and then investigate specific explanations for (i).

We formalize the claim that ``IDM learning is more sample efficient than BC'' as follows: for most dataset sizes $N_L$, the following holds with high probability:
\begin{align}
    \sE_{p_{\expert}(s, s')}\KL{\idm^*}{\hat\idm} < \sE_{p_{\expert}(s)}\KL{\expert}{\bcpolicy} \, , \label{eq:idm_better_bc}
\end{align}
where both $\hat\idm$ and $\bcpolicy$ were trained on $N_L$ action-labeled transitions. In other words, the learned IDM $\hat\idm$ is likely closer to $\idm^*$ than $\bcpolicy$ is to $\expert$, when both use the same amount of labeled data. Of course, if the learned IDM $\hat\idm$ is close to $\idm^*$, we expect the IDM-based policies $\idmpolicy$ to be close to $\hat\pi_{v^*, h^*} = \expert$. We formalize this last point with the following inequality:
\begin{align}
    \sE_{p_{\expert}(s)}\KL{\expert}{\idmpolicy} \leq \sE_{p_{\expert}(s,s')}\KL{\idm^*}{\hat\idm} , \label{eq:kl_policy_idm}
\end{align}
proved in Appendix~\ref{app:good_idm_good_policy}. By combining \eqref{eq:idm_better_bc} and \eqref{eq:kl_policy_idm}, we obtain, with high probability,
\begin{align*}
    \sE_{p_{\expert}(s)}\KL{\expert}{\idmpolicy} < \sE_{p_{\expert}(s)}\KL{\expert}{\bcpolicy}\,, \label{eq:kl_policy_idm}
\end{align*}
suggesting IDM-based policies will outperform BC in SSIL

In the rest of this section, we provide a more complete explanation of point (i), i.e. why IDM learning is more sample-efficient than BC. While the superior sample efficiency of IDM learning has been observed empirically in Minecraft~\cite{vpt} and in the context of VM-IDM policies for robotics~\cite{liang2025videogenerators, mimic2025pai}, it is not entirely clear when and why this should be expected. 
Moreover, this point might be surprising given the IDM has \textit{twice} as many inputs as the policy, implying a more complex hypothesis class typically leading to worst sample efficiency.

We attribute the observed superior sample efficiency of IDM learning against BC to two causes: First, the ground truth IDM $\idm^*$ tends to be \textbf{less complex} (Section~\ref{sec:idm_simple}) and \textbf{less stochastic}~(Section~\ref{sec:stochastic_expert}) than the expert $\expert$. Our analysis provides a more complete explanation for empirical observations and offers a valuable framework for understanding the performance gap between BC and IDM-based policies in different environments, as we will explore in Section~\ref{sec:experiments}.

\subsection{Expert policy is more complex than IDM}\label{sec:idm_simple}
In this section, we argue the following claim. 

\begin{claim}\label{claim:complexity}
    \textit{The ground-truth IDM $\idm^*(\act \mid \obs, \nobs)$ is often \textbf{less complex} than the expert $\expert(\act \mid \obs)$, and this contributes to IDM learning being more sample-efficient than BC.}
\end{claim}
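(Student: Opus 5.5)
To support the claim, I will combine a standard generalization bound, which makes ``complexity'' mean something precise, with structural examples showing that $\idm^*$ can lie in a much smaller hypothesis class than $\expert$. First I will fix a notion of complexity and the corresponding learning guarantee. For maximum-likelihood estimation over a hypothesis class $\mathcal{F}$ that is realizable (it contains the ground truth), standard results control the excess conditional KL, i.e. the left-hand sides of \eqref{eq:idm_better_bc}. With probability $1-\delta$ this quantity scales as $\tilde O\big((\log|\mathcal{F}| + \log(1/\delta))/N_L\big)$ for finite classes, with log-covering numbers or Rademacher complexity playing the same role in general. Applying this once to $\idmclass \ni \idm^*$ and once to $\policyclass \ni \expert$ shows that the right-hand comparison in \eqref{eq:idm_better_bc} is governed by the complexity of the smallest class containing each ground truth, rather than by the number of inputs. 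This disposes of the naive objection that the IDM has twice as many inputs.

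The second step is to argue that $\idm^*$ frequently lies in a low-complexity class while $\expert$ does not. I will decompose $\idm^*(a \mid s,s') \propto \expert(a \mid s)\, p(s' \mid s,a)$. In the common case where the dynamics make $a$ identifiable from $(s,s')$, this posterior is a point mass given by an inversion of the local dynamics. Examples are displacement in navigation, a change in end-effector pose, or which key moved the sprite. Such an inversion is typically a simple, local, and often translation-equivariant function of the difference between $s'$ and $s$, and it is independent of the task and of the goal. The expert, by contrast, must encode long-horizon planning: where the goal is, how to avoid obstacles, and a value function. I will make this concrete with a toy construction. Take a gridworld with $M$ maze layouts, where $\expert$ is the shortest-path policy. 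Any class representing $\expert$ must distinguish the layouts, so its log-cardinality grows with $M$ and with the grid size. The class $\{(s,s')\mapsto \text{move direction}\}$ of size $|\gA|$ already contains $\idm^*$. Plugging both classes into the bound from the first step yields \eqref{eq:idm_better_bc} for all $N_L$ beyond a small threshold, and combining this with \eqref{eq:kl_policy_idm} transfers the advantage to the IDM-based policy.

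The main obstacle is that ``often'' is an empirical rather than a formal statement. There are also settings where identifiability fails and $\idm^*$ inherits the complexity of $\expert$ through the prior factor; the extreme case is no-op transitions, where $s'$ carries no information about $a$. I will handle this in two ways. First, I will state the toy result rigorously as a separation theorem. Second, I will argue the general claim through the decomposition above: whenever $p(s'\mid s,a)$ separates actions, the posterior concentrates and no longer depends on $\expert$. Finally, I will defer the claim's empirical support to the experiments in Section~\ref{sec:experiments}, which track how IDM complexity correlates with the gap between the IDM-based policy and BC.
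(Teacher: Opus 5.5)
\paragraph{The central formal step does not go through.}
Your quantitative comparison rests on two upper bounds, $(\log|\idmclass|+\log(1/\delta))/N_L$ and $(\log|\policyclass|+\log(1/\delta))/N_L$. Ordering two upper bounds says nothing about the ordering of the actual errors in \eqref{eq:idm_better_bc}. For that you need a lower bound on BC's error, for example a minimax bound via Fano's inequality over a family of layouts.

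Such a lower bound first requires fixing the quantifiers. A class of size $|\gA|$ built around the move-direction map already encodes the answer, and by the same logic $\policyclass=\{\expert\}$ has log-cardinality zero. So ``any class representing $\expert$ must distinguish the layouts'' is true only if both classes must contain the ground truth for every layout in a family, with the learner not knowing which layout it faces.

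Even then, the separation holds in the low-data regime, not ``for all $N_L$ beyond a small threshold.'' In a finite maze, realizable MLE recovers $\expert$ exactly once the data pin down the layout or cover the states. After that, both sides of \eqref{eq:idm_better_bc} vanish, which is exactly what Figure~\ref{fig:maze_complexity} shows.

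There is also a technical problem. The $\log|\mathcal{F}|/N$ rate for realizable MLE holds for squared Hellinger distance or total variation, not for KL, unless likelihood ratios are bounded. That condition fails for the point-mass $\idm^*$ you rely on: with deterministic hypotheses the KL is either $0$ or $\infty$. It also fails for the paper's natural class, where a linear softmax contains $\idm^*$ only in the limit $\tau\to 0$.

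\paragraph{The ``smallest containing class'' claim does not hold for MLE.}
You say the bound is ``governed by the complexity of the smallest class containing each ground truth, rather than by the number of inputs.'' This is false for the estimator \eqref{eq:idm_MLE}. Uniform-convergence bounds depend on the class you actually optimize over. When $\hat\idm$ and $\bcpolicy$ use the same architecture, the IDM class is at least as rich, so capacity-based bounds favor BC.

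Your step~1 therefore sidesteps the twice-as-many-inputs objection rather than disposing of it. It only covers the case where a small $\idmclass$ is chosen a priori, as in the paper's LC and 1L-CNN runs. The same-architecture setting covers the high-capacity maze runs and all of Section~\ref{sec:experiments}. Reaching it needs structural risk minimization or an implicit-bias argument. The paper states explicitly that capacity control cannot explain that case and appeals to the simplicity bias of neural networks; your plan has no counterpart.

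For the ``less complex'' half, the paper avoids tailored classes. It shows that generic classes contain $\idm^*$ in their closure while $\expert$ is not linear: a linear softmax on $\nobs-\obs$, and a one-layer CNN with kernels $(-V^a,V^a)$, $V^a=x'_a-x_a$. Your Bayes decomposition, your no-op caveat (cf.\ Appendix~\ref{app:stoch_maze_env}), and your goal-independence point all agree with the paper.
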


In other words, in many settings, expressing $\expert$ requires a more complex/expressive model than what is needed to express $\idm^*$ which suggests that we can strike a better \textit{bias-variance tradeoff} when learning $\idm^*$ than when learning $\expert$. Indeed, if $\idm^*$ is simple, it means there is a low complexity hypothesis class $\gH$ that contains $\idm^*$, which results in $\hat\idm$ (Equation~\ref{eq:idm_MLE}) being \textit{unbiased}. Moreover, the fact that $\gH$ is low complexity means that $\hat\idm$ has \textit{low variance}. In contrast, since $\expert$ is complex, it requires a larger hypothesis class $\policyclass$ to make sure $\bcpolicy$ (Equation~\ref{eq:bc}) is unbiased, which in turn implies a larger variance and thus poorer generalization.
 
We investigate Claim~\ref{claim:complexity} on simple maze environments generated and solved using \texttt{mazelab}~\citep{mazelab}. %
We study two factors contributing to the complexity of $\expert$, namely \textit{environment complexity} and \textit{goal complexity}.

\subsubsection{Environment Complexity}\label{sec:maze_env_complexity}
\begin{figure}
    \hspace{0.21cm}\includegraphics[width=1.0\linewidth]{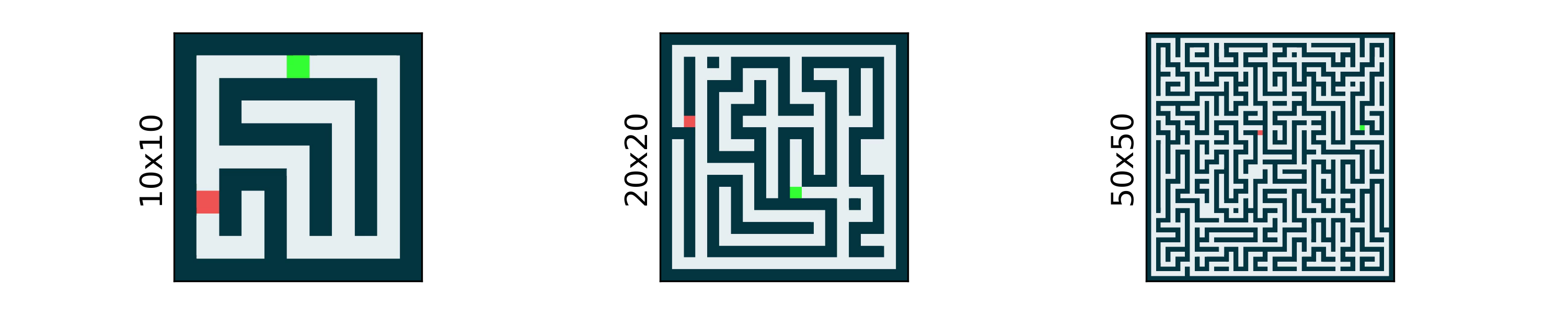}
    \includegraphics[width=1.0\linewidth]{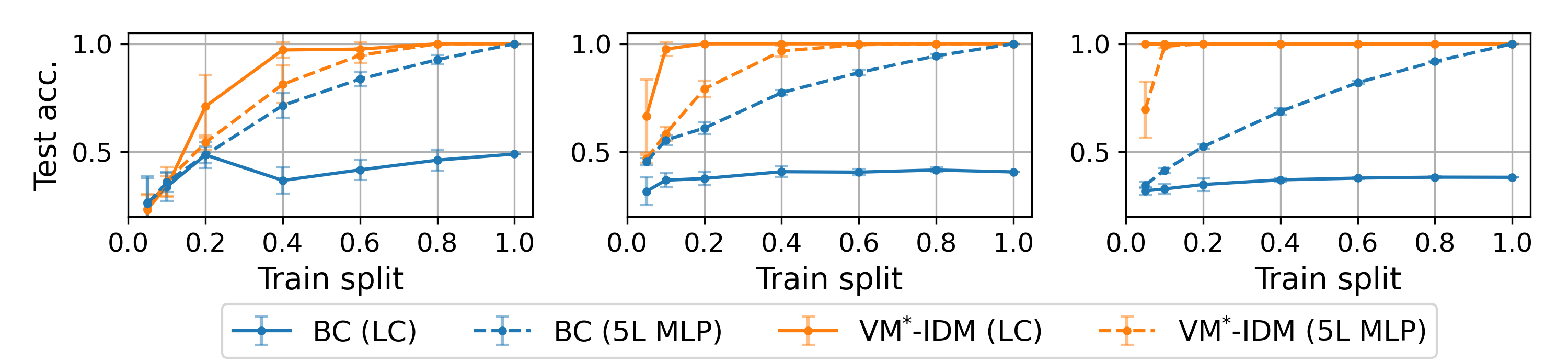}
    \includegraphics[width=1.0\linewidth]{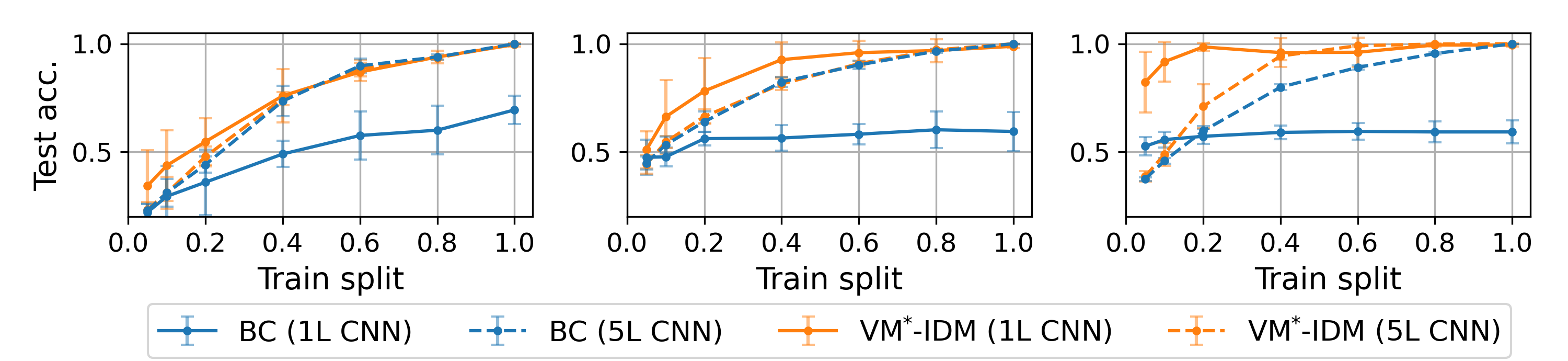}
    \caption{\textbf{Varying the environment complexity.} Comparing BC and VM$^*\!$-IDM, with different architectures for $\hat\pi$ and $\hat\idm$, respectively: \textit{LC} is a linear classifier, \textit{5L MLP} is a 5-layer multilayer perceptron and \textit{$n$L CNN} is an $n$-layer convolutional neural network. See Appendix~\ref{app:maze_exp} for details. Averaging over 5 seeds.}
    \label{fig:maze_complexity}
    \vspace{-0.45cm}
\end{figure}

\textbf{Experimental setup (Figure~\ref{fig:maze_complexity}).} We generate the transitions $(s, a, s') \in \DL$ using the maze environment and an expert policy $\expert$ solving the maze deterministically. We compare the test accuracy of BC and VM-IDM along four axes: \textit{maze complexity}, \textit{train split}, \textit{state space format} \& \textit{model capacity}. \textit{Maze complexity} refers to the size of the maze (10x10, 20x20 or 50x50). \textit{Train split} refers to the proportion of states $s$ that are present in the action-labeled dataset $\DL$, where $1$ means the full state space is sampled. The \textit{state space format} is either $\gS^\text{pos}$, where the state is the $(x, y)$ position of the agent, or $\gS^\text{img}$, where the state is an image representation of the maze including the player and the goal. \textit{Model capacity} takes two values, either \textit{high} or \textit{low}: When the state space format is $\gS^\text{pos}$, $\gH_\text{low}$ and $\policyclass_\text{low}$ are linear classifiers (LC), while $\gH_\text{high}$ and $\policyclass_\text{high}$ are multilayer perceptrons (MLP). When the state space format is $\gS^\text{img}$, $\gH_\text{low}$ and $\policyclass_\text{low}$ are one-layer convolutional networks (1L CNN) and $\gH_\text{high}$ and $\policyclass_\text{high}$ are 5-layers CNNs (5L CNN). In all cases, the goal location is fixed across episodes (see Section~\ref{sec:maze_goal} for experiments involving multiple goals). 
Note that, throughout Section~\ref{sec:idm_simple}, the VM-IDM uses the ground-truth video model $\video^*$, echoing the infinite $\DU$ regime. For more details, see Appendix~\ref{app:maze_exp}. 

\textbf{Results (Figure~\ref{fig:maze_complexity}).} First, we see that for each maze environment and each state space format, the low capacity VM-IDM achieves perfect test accuracy when given enough samples (indicating $h^* \in \gH_\text{low}$), while low-capacity BC policy never reaches perfect accuracy (indicating $\expert \not\in \policyclass_\text{low}$). \uline{This suggests that $\idm^*$ is less complex than $\expert$.} Secondly, while both low-capacity VM-IDM and high-capacity BC reach perfect performance when seeing enough data, the former outperforms the latter in the low data-regime; and this gap increases for more complex mazes. \uline{This suggests IDM learning is more sample-efficient than BC in this setting.} Indeed, since $\idm^*$ is simpler than $\expert$, we can use a smaller architecture to reduce variance without introducing bias, resulting in better test accuracy. Lastly, we observe that the high-capacity VM-IDM outperforms the high-capacity BC policy, especially for the more complex mazes. This trend is also present in the image state format, although not as strongly as for the position state format. We attribute this to the simplicity bias of neural networks which, while very expressive, have an implicit bias towards simpler functions. We will come back to this point soon.

\textbf{Simple analytic form for $\idm^*$ but not for $\expert$.} We now show that the ground-truth IDM $\idm^*$ can be expressed as a linear classifier, while the expert $\expert$ cannot. In a maze, executing an action deterministically moves the agent in the corresponding direction by one unit unless a wall prevents this move. Since the expert policy $\expert$ never runs into a wall, its induced ground-truth IDM $\idm^*$ takes a particularly simple form. Indeed, this can be seen by noticing that, when avoiding walls, we can exactly infer the action taken from the state difference $\nobs - \obs$. For example, when $\nobs - \obs = (1, 0)^\intercal$, we can infer $\act = \texttt{right}$; when $\nobs - \obs = (-1, 0)^\intercal$, we can infer $\act = \texttt{left}$; and so on. Since the points $\{(1, 0), (-1, 0), (0,1), (0,-1)\}$ are linearly separable and because the map $(s, s') \mapsto s' - s$ is linear, we can express $\idm^*(a \mid s, s')$ as $\softmax_a(V(\obs,\nobs))$ for some matrix $V \in \sR^{4\times 4}$. We make this construction explicit in Appendix~\ref{app:explicit_form_idm_pos}. In other words, $\idm^* \in \gH_\text{low}$, i.e. $\hat h$ is unbiased. This explains why the low-capacity VM-IDM obtains perfect test accuracy in Figure~\ref{fig:maze_complexity} when trained on sufficiently many labeled samples. Now contrast this with the expert policy $\expert(\act \mid \obs)$. For complex mazes, there is no obvious simple rule mapping the state $\obs$ to the optimal action $\act$. At the very least, it cannot be expressed by a linear classifier, as was the case for $\idm^*$. The expert $\expert$ has to somehow ``memorize'' all the trajectories leading to the end of the maze. In this case, the expert policy $\expert$ is much more complex than its corresponding ground-truth IDM $\idm^*$. The story is similar when the state $s$ is an image of the maze: We can formally show that the IDM $\idm^*$ can be expressed with a simple single layer convolutional neural network (see Appendix~\ref{app:explicit_form_idm_img}) while it appears the policy cannot (as suggested by its inferior performance in Figure~\ref{fig:maze_complexity}). In most realistic settings, we do not expect to always be able to write down a simple analytic form for the ground-truth IDM (otherwise, why would one learn it from data?) but we believe these examples serve as useful illustrations capturing the essence of our argument.

\textbf{Generalization via capacity control.} \textit{Capacity control} as a method to reduce variance is well-known: Reducing the size of the hypothesis class reduces the variance of the predictor. This common wisdom is formalized by the classical literature on statistical learning theory~\citep{ShalevBen2014understanding,MohriRostamizadehTalwalkar18} which proposes quantitative measures of complexity for hypothesis classes which upperbound, with high probability, the difference between the generalization loss and the empirical loss of a learned predictor. Measures of complexity include the VC dimension~\citep{VCdim} as well as the Rademacher complexity~\citep{Rademacher,RademacherBartlett}. These generalization bounds establish that controlling the capacity/expressivity of an hypothesis class reduces the variance of the learned predictor. This explains why the low-capacity VM-IDM policy (which we just showed is unbiased) outperforms the high-capacity one in the low-data regime. 

\textbf{Generalization via simplicity bias of NN.} Note that capacity control cannot explain why, in Figure~\ref{fig:maze_complexity}, the high-capacity VM-IDM policy outperforms the high-capacity BC policy in the low-data regime, since both $\hat\idm$ and $\bcpolicy$ are implemented with the same 5-layer neural network. Even worse, we have in fact that the complexity of $\idmclass_\text{high}$ is greater than that of $\policyclass_\text{high}$, since the IDM has twice as many inputs as the policy, which suggests that $\bcpolicy$ should generalize better than $\idmpolicy$, contrary to our observation. We explain this observation by the \textit{simplicity bias of neural networks}: When a neural network is in the \textit{overparametrized interpolation regime}, i.e. when multiple parameter configurations achieve zero training loss, training tends to favor the interpolating functions which are simpler/smoother. We hypothesize that this simplicity bias favors the simpler $\idm^*$ over the more complex $\expert$, thus explaining why fewer samples are needed to achieve greater performance for the VM-IDM policy. While the reasons why neural networks are implicitly biased towards simpler functions are not fully understood, theoretical explanations have been proposed, many of which are based on properties of gradient-based optimization such as potential regularizing effects~\citep{Advani17DynamicsGenErrorNNN, Soudry18ImplicitBiasGD,Gidel19GDLinearNets} and its tendency to first learn linear classifiers before learning increasingly complex functions~\cite{kalimeris2019sgd}. Other works attribute the simplicity bias to the parameter-to-function map of neural architectures~\citep{Prez2019DeepLG,Mingard21SGDBayesianSampler,chiang2023loss,buzaglo24NarrowTeachers,dziugaite25DataComplexity}.

\textbf{Stochastic environment.} In Appendix~\ref{app:stoch_maze_env}, we explore a stochastic variant of this setting where the agent remains static with some probability, regardless of the action taken. Interestingly, the ground-truth IDM $\idm^*$ is not linear in this setting. We nevertheless reach similar conclusions.   

\subsubsection{Goal Complexity} \label{sec:maze_goal}
\begin{figure}
    \centering
    \includegraphics[width=1\linewidth]{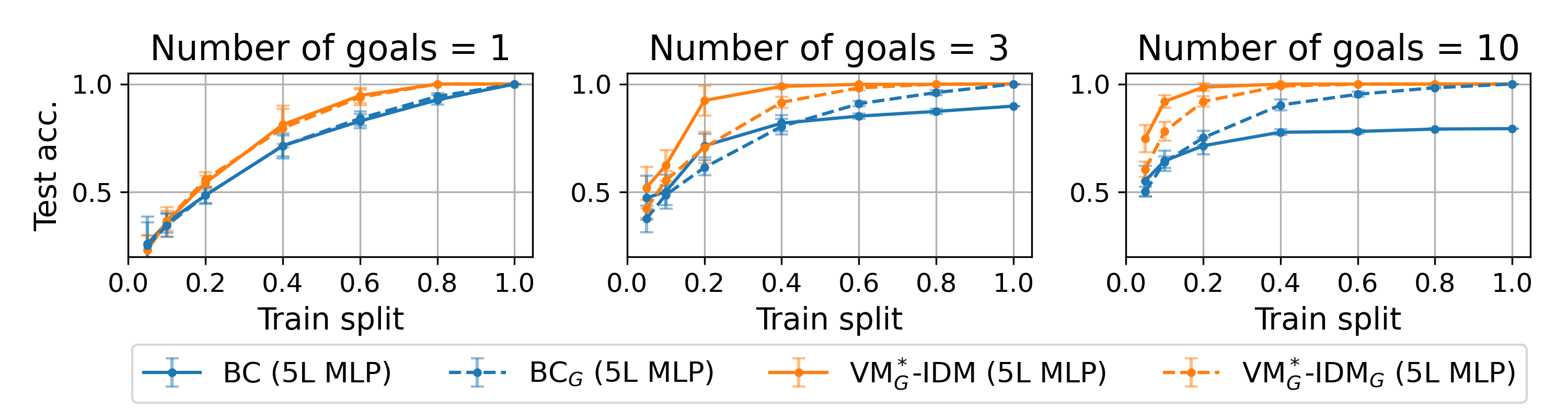}
    \caption{\textbf{Varying the number of goals.} BC$_G$ is behavior cloning with goal conditioning,  VM$^*_G$-IDM is VM$^*\!$-IDM where the VM is goal conditioned and the IDM is not; and VM$^*_G$-IDM$_G$ is when both the VM and the IDM are goal conditioned. See Appendix~\ref{app:maze_exp} for details. Averaging over 5 seeds.}
    \label{fig:maze_goal}
\end{figure}

In the last section, we argued that the complexity of the environment impacts the complexity of the expert $\expert$ more than it impacts the complexity of the IDM $\idm^*$, 
leading to IDM learning having a statistical advantage over BC.  We now argue that the \textit{diversity of goals} also behaves similarly. 

\textbf{Experimental setup (Figure~\ref{fig:maze_goal}).} We consider the same 10x10 maze environment as in Section~\ref{sec:maze_env_complexity}, but this time the goal location changes from one trajectory to another. We assume the goal $(x,y)$-location, $g$, is observed with each transition $(s, a, s')$ allowing \textit{goal-conditioning} for each model. 
We explore the impact of goal-conditioning for the BC and VM-IDM policies. We look at the BC policy with and without goal-conditioning, that is with and without $g$ as input. We also look at the VM-IDM policy $\idmpolicy$ where the VM is always goal-conditioned and equal to the ground-truth, i.e.  $\hat\video(s' \mid s, g) := p_{\expert}(s' \mid s, g)$, and where the IDM is either $\hat\idm(a \mid s, s')$ or $\hat\idm(a \mid s, s', g)$.

\textbf{Results (Figure~\ref{fig:maze_goal}).} First, we see without surprise that BC without goal-conditioning cannot reach perfect test accuracy even when seeing all possible transitions $(s, a, s')$, i.e. it is biased. In contrast, VM-IDM without goal-conditioning does reach perfect test accuracy, indicating that the goal $g$ is not necessary to predict the action given $(s,s')$, which of course makes intuitive sense in this simple environment. \uline{This is thus another way in which the IDM $\idm^*$ can be  
less complex than the expert $\expert$}, because the former effectively does not depend on $g$ while the latter does. While BC with goal-conditioning does reach perfect accuracy with enough samples, in the low-data regime it is outperformed by VM-IDM, especially without goal-conditioning. \uline{This shows how IDM learning is more sample-efficient than BC in this setting.}

\subsection{Expert policy is more stochastic than IDM}\label{sec:stochastic_expert}
\begin{figure}
    \hspace{0.23cm}\includegraphics[width=1.0\linewidth]{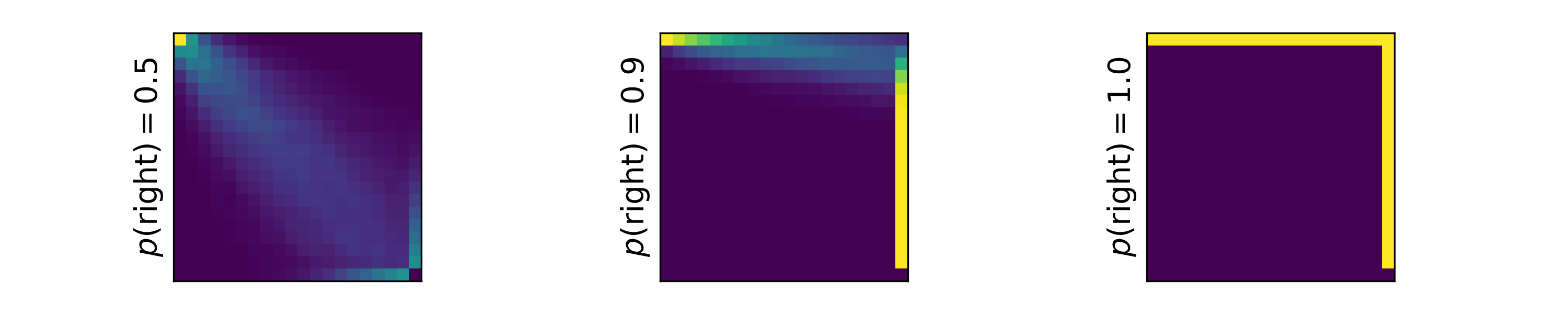}
    \includegraphics[width=1.0\linewidth]{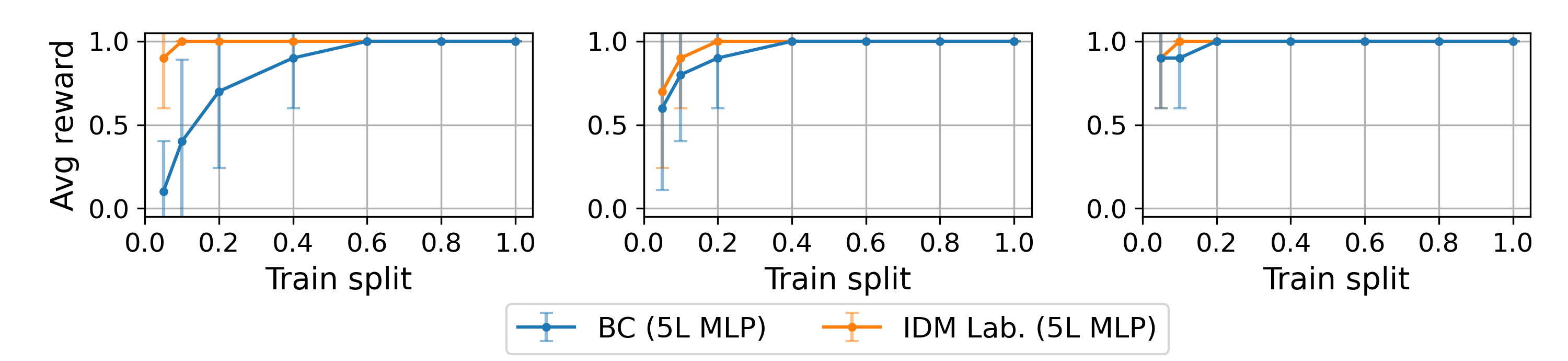}
    \caption{\textbf{Varying the stochasticity of the expert.} First row shows the state visitation distributions for different experts. Comparing the average reward of BC and IDM labeling. See Appendix~\ref{app:maze_exp} for details. Averaging over 10 seeds.}
    \label{fig:maze_stocha}
\end{figure}

In this section, we argue the following claim. 
\begin{claim}\label{claim:stochasticity}
    \textit{The ground-truth IDM $\idm^*(\act \mid \obs, \nobs)$ is often \textbf{less stochastic} than the expert $\expert(\act \mid \obs)$, and this contributes to IDM learning being more sample efficient than BC.}
\end{claim}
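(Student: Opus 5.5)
The claim has two parts: a structural statement that $\idm^*$ is typically less stochastic than $\expert$, and a statistical statement that this lower stochasticity makes IDM learning more sample efficient. I will argue them separately, mirroring the treatment of Claim~\ref{claim:complexity}.

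\textbf{Part 1: $\idm^*$ is no more stochastic than $\expert$ on average.} By definition $\idm^*(\act \mid \obs, \nobs) = p_{\expert}(\act \mid \obs, \nobs)$, so conditional entropies give
\begin{align*}
H(A \mid S, S') \le H(A \mid S),
\end{align*}
since conditioning cannot increase entropy. The gap is the mutual information $I(A; S' \mid S)$, which is large whenever the next state reveals the action. In the deterministic maze with a wall-avoiding expert, $I(A;S'\mid S)=H(A\mid S)$, so $\idm^*$ is deterministic even when $\expert$ randomizes among several optimal moves. The construction of Appendix~\ref{app:explicit_form_idm_pos} shows this explicitly. Stochastic environments, as in Appendix~\ref{app:stoch_maze_env}, make $\idm^*$ only partially deterministic.

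\textbf{Part 2: less stochasticity helps the learner.} I would first use the decomposition of the expected log-loss into the irreducible conditional entropy plus the KL term that appears in \eqref{eq:idm_better_bc}. I would then invoke fast-rate results for log-loss and classification. Under low noise (Massart/Tsybakov conditions, or realizable deterministic targets), the excess risk scales like $O(d/N_L)$. Under high noise it can scale like $O(\sqrt{d/N_L})$, and maximum-likelihood estimates of near-uniform conditionals have variance growing with the noise level.

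As a concrete illustration, take a finite state space with tabular models. Estimating a categorical distribution with $|\gA|$ outcomes from $n$ samples has expected KL roughly $(k-1)/(2n)$, where $k$ is the support size. A deterministic $\idm^*$ has support size one, so the KL vanishes once the correct label has been seen. A stochastic $\expert$ keeps paying this cost at every state.

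I would then complement this with the experiment in Figure~\ref{fig:maze_stocha}. There I vary the expert's randomization among optimal actions while $\idm^*$ stays fixed, and show that BC degrades as stochasticity grows while IDM labeling does not.

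\textbf{Main obstacle.} The claim is qualitative ("often", "contributes"), so a fully general theorem is not attainable. In particular, the entropy inequality holds only on average, and lower entropy alone does not imply a lower KL for a given $N_L$ without structural assumptions. The hardest step is making Part 2 rigorous beyond the tabular case. There I would assume realizability plus a margin condition on $\idm^*$ and cite existing fast-rate bounds, rather than derive new ones.
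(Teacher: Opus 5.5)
Your proposal follows essentially the same route as the paper: the always-true inequality $H(a \mid s, s') \le H(a \mid s)$, made strict when several actions are optimal and $(s,s')$ reveals the action, combined with an appeal to low-noise (Tsybakov-type) learning-theory results and the stochastic-expert grid experiment of Figure~\ref{fig:maze_stocha}. The differences are cosmetic (naming the gap $I(A;S' \mid S)$, and using a tabular categorical-KL illustration instead of the paper's Gauss--Markov linear-regression analogy), but that illustration depends on the estimator: unsmoothed MLE gives infinite forward KL for any stochastic target, while add-constant smoothing charges even a deterministic target order $(|\gA|-1)/n$, so it is a heuristic rather than a general fact.
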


In general, we always have $H(a \mid s, s') \leq H(a \mid s)$, where ${H(a \mid s)} := {-\sE_{s,a}\log \expert(a \mid s)}$ and ${H(a \mid s, s')} := {-\sE_{s,a,s'} \log \idm^*(a \mid s, s')}$ are \textit{conditional entropies}. We claim that this inequality is often strict, meaning ${\idm^*(a \mid s,s')}$ is less stochastic on average than ${\expert(a \mid s)}$. This can happen, e.g., when multiple actions are equally optimal, allowing for a stochastic expert; and when observing $(s,s')$ reveals with high certainty the action taken, which is the case in many navigation tasks (e.g. maze environments). This higher stochasticity makes BC less sample-efficient than IDM learning because the sample-efficiency of an estimator is often negatively impacted by the stochasticity of the ground-truth data-generating process (DGP). For example, in linear regression, the variance of the least-square estimator $\hat\beta$ is proportional to the variance of the residual noise $\epsilon$ in the DGP $y = \beta^* x + \epsilon$, under the Gauss-Markov conditions~\citep[Section 1.8]{Sen1991RegressionAT}. Also, generalization bounds for binary classification show how having a ``likely not too stochastic'' DGP $p^*(y \mid x)$ leads to improved sample efficiency~\citep{mammen1999smooth}.\footnote{We are referring to the Tsybakov noise condition \citep{mammen1999smooth}. See \citet[Section 3.2]{rigolletMITcourse}.} We now investigate Claim~\ref{claim:stochasticity} on a maze-like environment.

\textbf{Experimental setup (Figure~\ref{fig:maze_stocha}).} We consider a simple 20x20 grid environment where the agent starts at the top left corner and must reach the bottom right corner. %
There are no obstacles preventing movement, except for the borders of the grid. We consider three expert policies to generate data. All three are optimal and consist in going right with probability $p(\texttt{right})$ and going down with probability $1 - p(\texttt{right})$, except on the right or bottom border, where the policy deterministically chooses the only optimal action. We vary $p(\texttt{right})$ to study the effect of the expert stochasticity on the performance of BC and IDM labeling. In both cases, $\hat\policy$ and $\hat\idm$ use the same MLP architecture. Note that the induced IDM $\idm^*$ is always deterministic, contrary to $\expert$. %

\textbf{Results (Figure~\ref{fig:maze_stocha}).} In the low-data regime, we see that, as the stochasticity of the expert increases, the reward obtained by BC diminishes, contrarily to the IDM-based policy which is much less impacted. This is because \uline{the ground-truth IDM $\idm^*$ remains deterministic, no matter how stochastic $\expert$ is.} \uline{This suggests the lower stochasticity of $\idm^*$ contributes to its sample-efficiency.}

\section{Methods and Experiments}
\label{sec:experiments}
\begin{figure}
    \includegraphics[width=\linewidth]{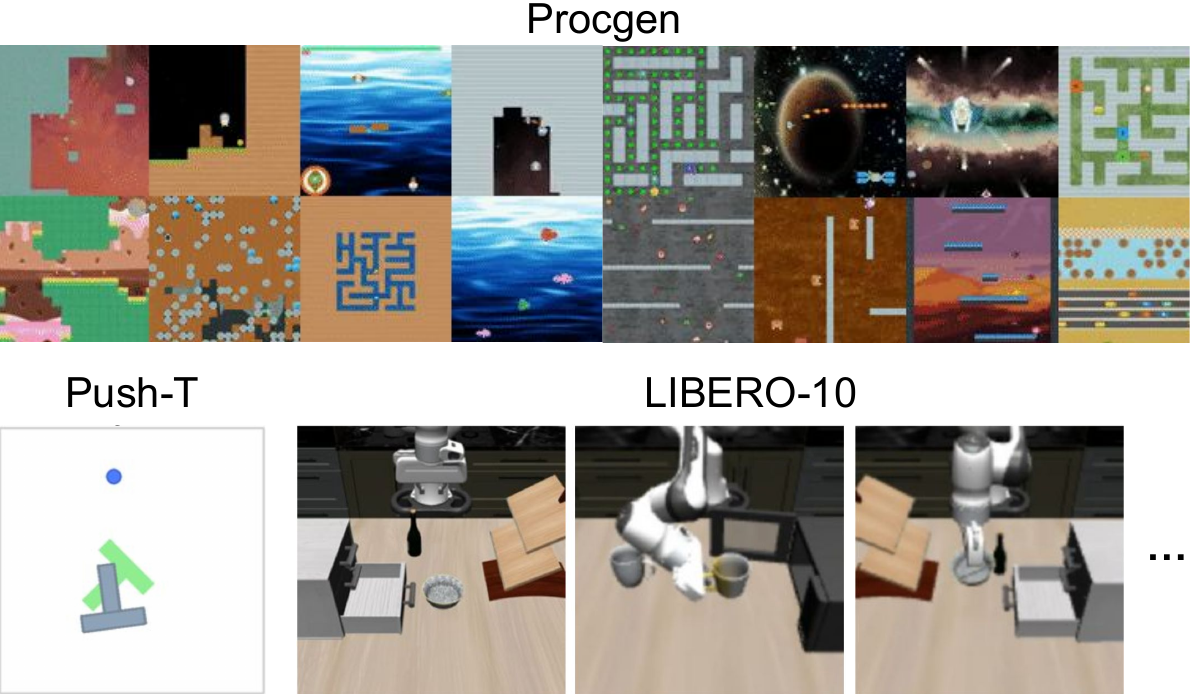}
    \caption{Environments for the vision-based experiments in Section~\ref{sec:experiments}. \textbf{(Top)} Procgen~\cite{cobbe2019procgen} consists of 16 Atari-like environments with discrete actions. We use the data and settings from LAPO~\cite{schmidt2024learning}. 
    \textbf{(Bottom Left)} Push-T~\cite{chi2025diffusion} requires pushing a T-shaped block (gray) in a target position (green) with a circular end-effector (blue). \textbf{(Bottom Right)} LIBERO-10~\cite{liu2023libero} is a set of 10 long-horizon robot manipulation tasks. We use continuous action chunks for both Push-T and LIBERO-10.
    }
    \label{fig:datasets}
    \vspace{-0.45cm}
\end{figure}

The experiments  of this section serve a dual purpose. 

{First}, we aim to compare IDM-based policies, BC, and recent baselines for SSIL over a range of architectures and tasks, including Atari-like games with discrete actions (Section~\ref{subsec:procgen}) and manipulation tasks with continuous actions (Section~\ref{subsec:manipulation}). When IDM-based policies outperform BC, we discuss if those gains can be be plausibly attributed to the factors discussed in Section~\ref{sec:stat_advantage_idm}.

{Second}, we explore how the observed statistical advantage of IDM learning can be leveraged to improve other IDM-based SSIL methods. We will therefore begin each experiment section by introducing these improvements. In particular, we propose LAPO+, an improved algorithm for latent action policy learning (Section~\ref{subsec:procgen}), and discuss a new VM-IDM sampling scheme for unified video-action models (Section~\ref{subsec:manipulation}). Code for all our experiments is available \href{https://github.com/sachaMorin/idm-ssil}{here}.

\subsection{Procgen}
\label{subsec:procgen}

\textbf{LAPO+.} In Section~\ref{sec:lapo}, we saw that LAPO  operates in three stages which are to (1) train a latent inverse dynamics model (LIDM) $\tilde\idm$ and a latent forward dynamics model (LFDM) $\tilde f$ by minimizing a reconstruction loss, (2) train a latent policy $\tilde\pi$ on transitions labeled by the LIDM, and (3) train a decoding head $\phi$ on top of the latent policy $\tilde\pi$ using $\DL$ to go from the latent actions in $\mathcal{Z}$ to true actions in $\gA$. See Table~\ref{tab:summary} for a detailed summary.

Note that the third stage can be thought of as performing \textit{BC with a pretrained backbone $\tilde\pi$}. Motivated by the observation that IDM learning is often more sample-efficient than BC, we propose an alternative to LAPO, coined \textbf{LAPO+}, which learns a decoding head $\phi$ on top of the LIDM $\tilde h$ to go from latent actions in $\mathcal{Z}$ to true actions in $\gA$, corresponding to \textit{IDM learning with a pretrained backbone $\tilde h$}. %

LAPO and LAPO+ are compared in Table~\ref{tab:summary}. Specifically, LAPO+ uses the same \textbf{first stage} as LAPO to learn the LIDM $z = \latent{h}(s, s')$ and considers a \textbf{second stage} where it learns a decoding head $\phi(a \mid z)$ on top of the latent IDM by minimizing $-\frac{1}{N_L}\sum_{i=1}^{N_L} \log\phi(\act_i \mid \latent\idm(\obs_i, \nobs_i))$ w.r.t. to $\phi$ and keeping $\latent\idm$ frozen, resulting in a decoded IDM $\hat\idm = \phi \circ \latent\idm$. In the \textbf{third stage}, it learns the policy $\hat\pi$ by performing IDM labeling (Section~\ref{sec:idm_labeling}) using labels generated from $\hat\idm = \phi \circ \tilde\idm$. Conceptually, LAPO+ inverts the second stage (LIDM labeling) and the third stage (decoding $\mathcal{Z}$ to $\gA$) of LAPO.

\begin{figure}[t!]
    \centering
    \includegraphics[width=\linewidth]{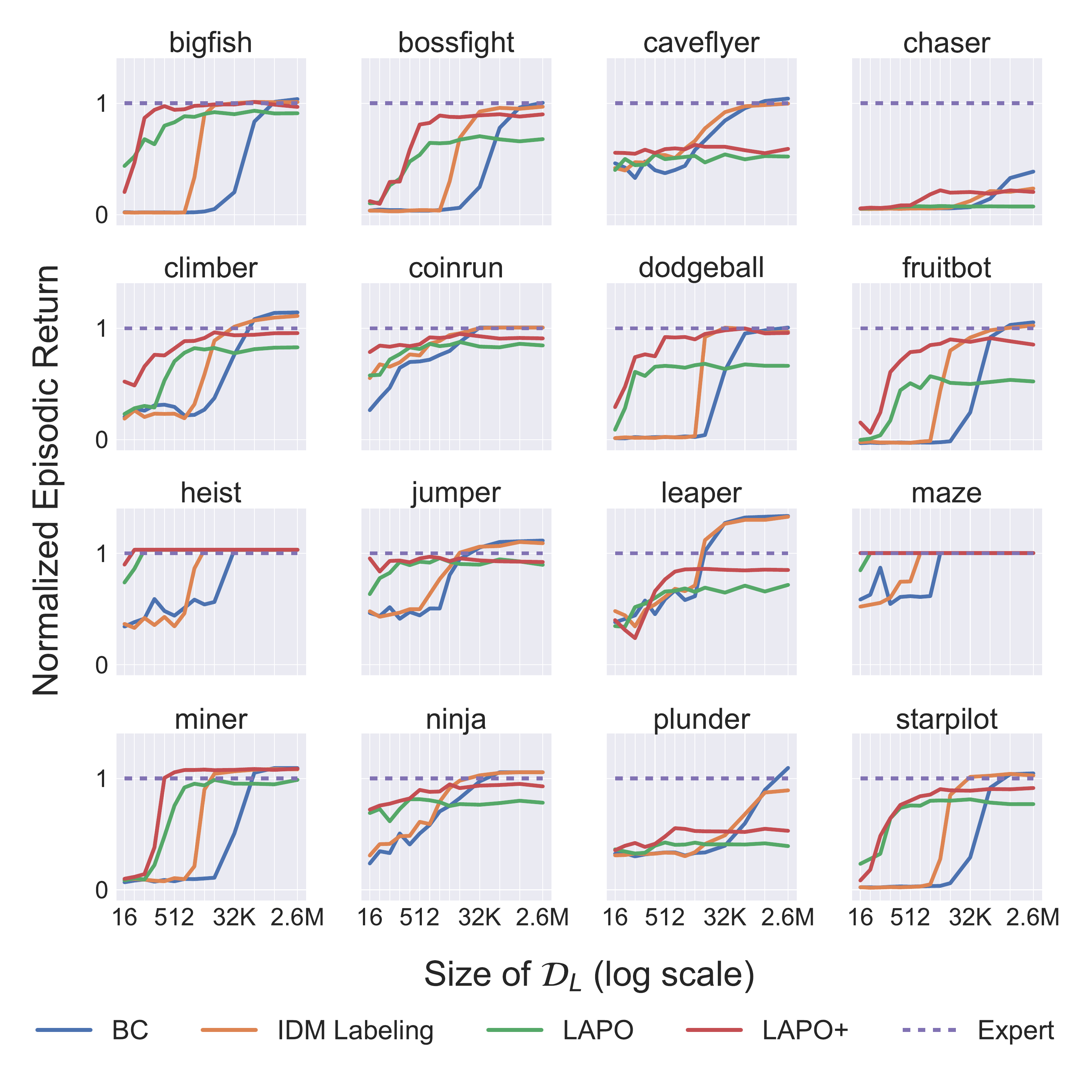}
    \caption{Normalized maximum return achieved during training by BC, IDM labeling, LAPO and LAPO+ on the Procgen benchmark (Figure~\ref{fig:datasets}), varying the number of action-labeled transitions in $\DL$. We show the average of 3 seeds, with $\DL$ being randomly sampled from the whole datasets each time. IDM labeling, LAPO and LAPO+ use all 2.6M transitions without action labels as $\DU$.}
    \vspace{-0.4cm}
    \label{fig:lapo_procgen}
\end{figure}

\begin{figure}[htbp]
     \centering
     \begin{subfigure}[b]{0.5\linewidth}
         \centering
         \includegraphics[width=\textwidth]{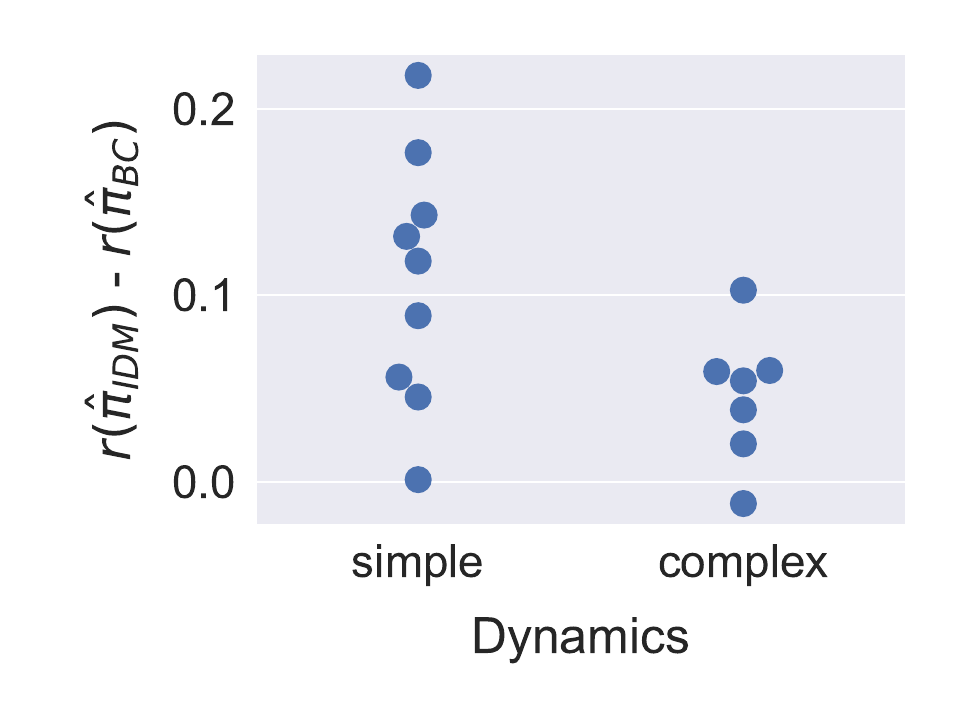}
         \caption{Complexity Analysis}
         \label{fig:complexity}
     \end{subfigure}%
     \hfill
     \begin{subfigure}[b]{0.5\linewidth}
         \centering
         \includegraphics[width=\textwidth]{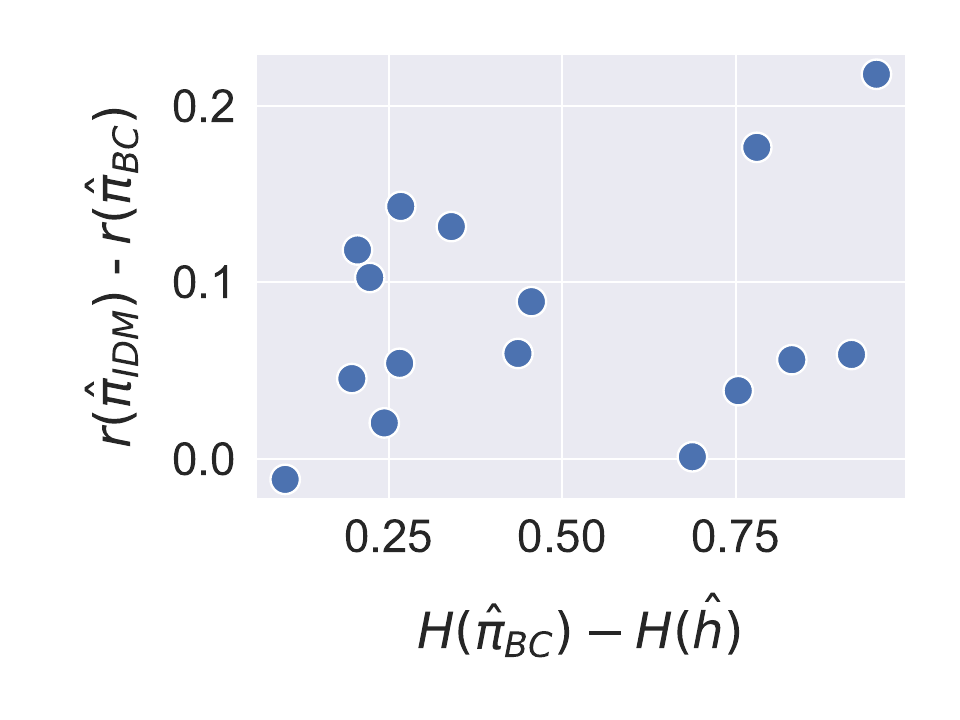}
         \caption{Stochasticity Analysis}
         \label{fig:stochasticity}
     \end{subfigure}
     \caption{We study the complexity and stochasticity factors from Section~\ref{sec:stat_advantage_idm} across the 16 Procgen environments. As an aggregate score, we consider the \textit{mean return gap} $r(\hat\policy_{IDM}) - r(\hat\policy_{BC})$ between IDM labeling and BC, which we average over $\DL$ sizes (Figure~\ref{fig:lapo_procgen}). \textbf{Complexity Analysis.} We divide environments based on their dynamics (Appendix~\ref{app:procgen_complexity}): \textit{simple} environments are those in which the agent moves strictly because of an input action while the agent can move due to other factors in \textit{complex} environments (e.g., momentum, gravity, moving platform). We expect the IDM to be simpler under simple dynamics and find the mean return gap to be higher in those environments. \textbf{Stochasticity Analysis.} As a proxy for the entropy of the ground truth policy and IDM in each environment, we consider the average conditional entropies on a test set of $\hat\policy_{BC}$ and the IDM $\hat h$ learned on all 2.6M transitions, allowing us to study an approximate entropy gap $H(\hat\policy_{BC}) - H(\hat h)$. We first note that $H(\hat\policy_{BC}) > H(\hat h)$ across all environments, as hinted in Section~\ref{sec:stochastic_expert}. We then observe a positive, although imperfect correlation with the mean return gap.
     }
     \label{fig:complexity_stochasticity}
     \vspace{-0.25cm}
\end{figure}

\textbf{Results (Figures~\ref{fig:lapo_procgen}~\&~\ref{fig:complexity_stochasticity}).} We report the performance of BC, IDM labeling, LAPO and LAPO+ in Procgen environments in Figure~\ref{fig:lapo_procgen}. Training details are in Appendix~\ref{app:procgen}. We find IDM labeling to outperform BC by a few orders of magnitude in 9 environments, while gains are modest or non-existent on the rest. In Figure~\ref{fig:complexity_stochasticity}, we discuss how the factors presented in Section~\ref{sec:stat_advantage_idm} can shed light on the fluctuating sample efficiency gap between IDM labeling and BC.

For their part, LAPO \& LAPO+ frequently yield capable policies vastly outperforming BC and IDM labeling in low-data regimes, with LAPO+ consistently outperforming LAPO. This underscores the importance of choosing the right latent function to decode with a small $\DL$. Somewhat orthogonal to our claims, we note that LAPO and LAPO+ tend to underperform in high-data regimes, suggesting potential trade-offs associated with latent action policies.

\subsection{Manipulation}
\label{subsec:manipulation}
\textbf{Video-Action Multitasking.} Recent methods for BC use unified, typically high-capacity architectures for video and action prediction where the VM and policy share most of their parameters~\cite{wu2023gr1, guo2024pad}. Follow-up works extend parameter sharing to IDMs and forward dynamics models (FDM), 
 and rely on techniques such as input masking~\cite{uva} or separate diffusion timesteps~\cite{zhu2025uwm} to specifically sample the policy, the VM, the IDM or the FDM. We will refer to this setting as \textbf{multi-tasking}. Our experiments focus on UVA~\cite{uva}, which has shown promising results on robot manipulation. We use the UVA architecture as a testbed to study BC and IDM-based policies using modern features such as diffusion action heads, frame stacking and action chunking. We emphasize that UVA uses a single architecture to implement the policy, VM, IDM and FDM.
 
 \textbf{Methods.} We learn \textbf{BC}, \textbf{IDM labeling} and \textbf{VM-IDM} policies using the UVA architecture (without multitasking). We also train standard UVA (with multitasking) which samples actions from its policy ({\textbf{Policy (UVA)}}). As an alternative, we propose to sample actions as a VM-IDM (\textbf{VM-IDM (UVA)}). The latter is a compact parametrization of a VM-IDM and we hypothesize that it may lead to some improvements in the low-data regimes (Section~\ref{sec:stat_advantage_idm}). We follow the UVA recommendation to pretrain all models on the VM task on $\DU$, meaning all models in this section perform SSIL via their initialization. Training details are provided in Appendix~\ref{app:manipulation}.

\textbf{Results (Figure~\ref{fig:manipulation}).}  We compare these methods on Push-T and LIBERO-10. IDM-based policies outperform BC in most settings and by an especially large margin on Push-T, something we attribute to the particularly simple nature of the IDM on this problem (the action is the location of the agent). We note that IDM labeling tends to trail VM-IDM by a small margin. We speculate that this could be explained by the relatively limited size of $\DU$ (as opposed to the infinite regime considered in Section~\ref{sec:connecting}) and some generalization abilities of the VM. We finally observe that sampling the same UVA models as VM-IDMs (VM-IDM (UVA)) leads to equal or improved performance over sampling them as policies (Policy (UVA)), illustrating how UVA learns capable IDMs from relatively few samples.

\begin{figure}[h]
    \centering
    \includegraphics[width=1\linewidth]{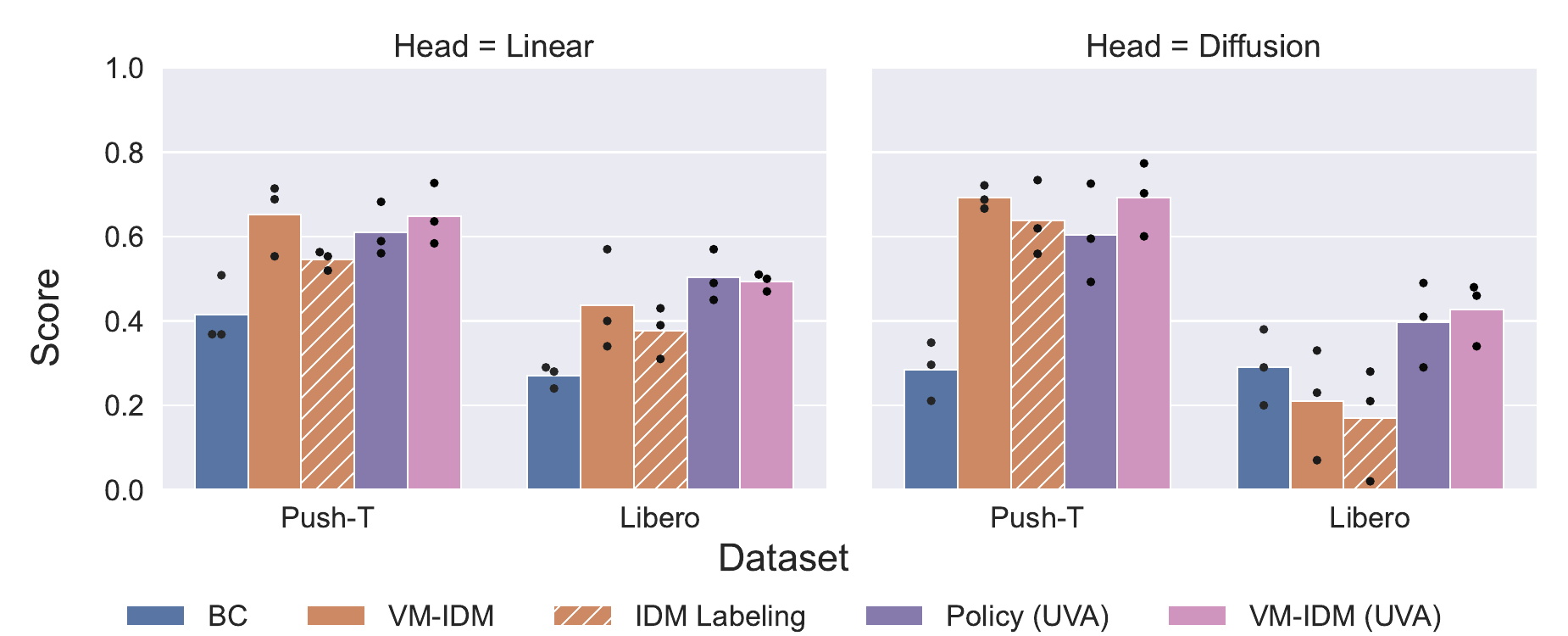}
    \caption{Policy performance on Push-T and LIBERO-10 (Figure~\ref{fig:datasets}). For $\DL$, we use 10 demonstrations for Push-T and 2 demonstrations per task for LIBERO-10, representing approximately 5\% of available demonstrations, which are all included without action labels in $\DU$. The Push-T score is the average max IoU with the target T over 50 rollouts while the LIBERO-10 score is the success rate over 100 rollouts. We report the performance of the last checkpoints averaged over 3 training seeds (we resample $\DL$ for each seed) and overlay the actual score of each seed as black markers. We also report results of a UVA variant with a linear action head (Appendix~\ref{app:manipulation}).}
    \label{fig:manipulation}
\end{figure}

\section{Discussion}
While the unifying concept of IDM-based policy (Section~\ref{sec:connecting}) and its formalism (Section~\ref{sec:stat_advantage_idm}) provide a valuable framework to understand the empirical performance of IDM-based policies in SSIL settings, some limitations remain. First, our modeling and experiments assume a curated $\DU$ which only includes expert data sampled precisely from the task of interest. A more practical setting would be when $\DU$ is sampled from an environment/expert that differs from the one of interest used to sample $\DL$. Second, we focus on generalization and ignore sampling costs, with VM-IDM approaches being generally slower (although recent research efforts aim to address this shortcoming~\cite{hu2024vpp, mimic2025pai}). Finally, our work focuses on the sample efficiency of IDM learning w.r.t. action labels. We did not investigate the difference in generalization between VM-IDM and IDM labeling in the finite $\DU$ setting, which might be crucial to fully understand the differences between both methods. These present promising avenues for future work.

\section*{Acknowledgements} 
The work at the Université de Montréal was supported by a Natural Sciences and Engineering Research Council of Canada (NSERC) PGS D Scholarship (Morin).
The authors would like to thank Simon Lacoste-Julien for engaging in insightful discussions that ultimately improved this work. This research was enabled in part by computational resources provided by Calcul Québec (calculquebec.ca), the Digital Research Alliance of Canada (alliancecan.ca) and Mila (mila.quebec). 

\section*{Impact Statement}
This paper presents work whose goal is to advance the field of Machine Learning. There are many potential societal consequences of our work, none which we feel must be specifically highlighted here.
\bibliography{main}
\bibliographystyle{icml2026}

\newpage
\appendix
\onecolumn

\section{Contribution Statement}
\textbf{Sacha Morin} and \textbf{Sébastien Lachapelle} jointly conceptualized the project, engaging in extensive discussions on the mechanisms behind the success of IDM learning and IDM-based policies. They also authored the manuscript.

\textbf{Sacha} led the project and leveraged his knowledge of the SSIL and robotics literature to identify IDM learning as a recurring pattern in a number of recent methods, providing the foundation for the paper. Sacha also designed and ran the Procgen, Push-T, and LIBERO-10 experiments in Section~\ref{sec:experiments}.

\textbf{Moonsub Byeon} contributed to multiple brainstorming sessions, often providing valuable advice on experiment design.

\textbf{Alexia Jolicoeur-Martineau} offered guidance on training video models and provided valuable feedback early on during the project.

\textbf{Sébastien} was the lead advisor on this project, contributing key statistical learning perspectives and providing the formalism needed to unify IDM-based methods. Sébastien also led the maze experiments in Section~\ref{sec:stat_advantage_idm}.

\section{Proof of Inequality \ref{eq:kl_policy_idm}}
\label{app:good_idm_good_policy}

We now show that 
$$\sE_{p_{\expert}(s)}\KL{\expert(a \mid s)}{\idmpolicy(a \mid s)} \leq \sE_{p_{\expert}(s, s')}\KL{\idm^*(a \mid s, s')}{\hat\idm(a \mid s, s')}$$
\begin{proof}
    \begin{align*}
        \sE_{p_{\expert}(s, s')}\KL{\idm^*(a \mid s, s')}{\hat\idm(a \mid s, s')} &= \sE_{p_{\expert}(s, s')} \sum_{\act \in \gA} \idm^*(a \mid s, s') \log \frac{\idm^*(a \mid s, s')}{\hat\idm(a \mid s, s')} \\
        &= \sE_{p_{\expert}(s)} \int \sum_{\act \in \gA} \video^*(s' \mid s) \idm^*(a \mid s, s') \log \frac{\idm^*(a \mid s, s')}{\hat\idm(a \mid s, s')}ds'  \\
        &= \sE_{p_{\expert}(s)} \int \sum_{\act \in \gA} \video^*(s' \mid s) \idm^*(a \mid s, s') \log \frac{\idm^*(a \mid s, s') \video^*(s' \mid s)}{\hat\idm(a \mid s, s')\video^*(s' \mid s)}ds'  \\
        &= \sE_{p_{\expert}(s)} \int \sum_{\act \in \gA} p_{\expert}(a, s' \mid s) \log \frac{p_{\expert}(a, s' \mid s)}{\hat\idm(a \mid s, s')\video^*(s' \mid s)}ds' \,.
    \end{align*}
Define $p_{\video^*, \hat\idm}(a, s' \mid s) := \hat\idm(a \mid s, s')\video^*(s' \mid s)$. From this we obtain
\begin{align*}
    \sE_{p_{\expert}(s, s')}&\KL{\idm^*(a \mid s, s')}{\hat\idm(a \mid s, s')} \\
    &= \sE_{p_{\expert}(s)} \int \sum_{\act \in \gA} p_{\expert}(a, s' \mid s) \log \frac{p_{\expert}(a, s' \mid s)}{p_{\video^*, \hat\idm}(a, s' \mid s)}ds' \\
    &= \sE_{p_{\expert}(s)} \KL{p_{\expert}(a, s' \mid s)}{p_{\video^*, \hat\idm}(a, s' \mid s)} \\
    &= \sE_{p_{\expert}(s)} \KL{p_{\expert}(a \mid s)}{p_{\video^*, \hat\idm}(a \mid s)} + \sE_{p_{\expert}(s, a)} \KL{p_{\expert}(s' \mid s, a)}{p_{\video^*, \hat\idm}(s' \mid s, a)} \\
    &= \sE_{p_{\expert}(s)} \KL{\expert(a \mid s)}{\hat\pi_{\video^*, \hat\idm}(a \mid s)} + \sE_{p_{\expert}(s, a)} \KL{p(s' \mid s, a)}{p_{\video^*, \hat\idm}(s' \mid s, a)} \\
    & \geq \sE_{p_{\expert}(s)}\KL{\expert(a \mid s)}{\idmpolicy(a \mid s)} \,,
\end{align*}
where we leveraged the chain rule for the KL divergence in the third equality and the fact that the KL divergence is always greater or equal to zero in the last inequality.
\end{proof}

\section{Explicit expressions for simple ground-truth IDM in Figure~\ref{fig:maze_complexity}}
\label{app:explicit_form_idm}
\subsection{Positions as states ($\gS_\textnormal{pos}$)}
\label{app:explicit_form_idm_pos}
We now show that the ground-truth IDM $\idm^*$ from Figure~\ref{fig:maze_complexity} with position states ($\gS_\text{pos}$) can be expressed as a linear classifier. Executing an action deterministically moves the player in the corresponding direction by one unit unless a wall prevents this move. Assuming the expert policy $\expert$ never runs into a wall (which is the case here), its induced ground-truth IDM $\idm^*$ takes a particularly simple form. Indeed, this can be seen by noticing that, when avoiding walls, we can exactly infer the action taken from the state difference $\nobs - \obs$. For example, when $\nobs - \obs = (1, 0)^\intercal$, we can infer $\act = \texttt{right}$; when $\nobs - \obs = (-1, 0)^\intercal$, we can infer $\act = \texttt{left}$; and so on. Moreover, the map from $\nobs - \obs$ to $\act$ can be expressed with a simple linear classifier $\softmax(\tau^{-1}W(\nobs - \obs))$ where $\tau$ is a temperature parameter and
$$
    W := \begin{bmatrix}
    1 & -1 & 0 & 0 \\
    0 & 0 & 1 & -1 
    \end{bmatrix}^\intercal \,. %
$$
Importantly, when $\tau \rightarrow 0$, we have that $\softmax_a(\tau^{-1}W(\nobs - \obs))$ approaches the ground-truth IDM $\idm^*(\act \mid \obs, \nobs)$, since it concentrates probability mass on the correct action. Since the map $(s, s') \mapsto \tau^{-1}W(s' - s)$ is linear, we have that $\idm^*$ can be (approximately) expressed by a linear classifier. In other words, $\idm^*$ is in the closure of $\gH_\text{low}$.

\subsection{Images as states ($\gS_\textnormal{img}$)}
\label{app:explicit_form_idm_img}

We now show that the ground-truth IDM $h^*$ from Figure~\ref{fig:maze_complexity} with image states ($\gS_\text{img}$) can be expressed as a simple one-layer network with a convolutional layer with a 3x3 kernel, mapping the 6 input channels ($s$ and $s'$ have three channels each) to 4 channels (one per action) without padding nor stride; with a global max pooling layer (same as described in \ref{app:architectures}). For each action $a$, we construct the 3x3 kernel $K^a \in \sR^{6\times3\times 3}$. Let $x, x'\in \sR^{3\times 3\times 3}$ be 3x3 patches respectively from $s$ and $s'$ taken at the same location. We can see $K^a$ as the concatenation of two kernels, namely $W^a, V^{a} \in \sR^{3 \times 3 \times 3}$. The dot product between the kernel $K^a = (W^a, V^{a})$ and the patch $(x,x')$ can thus be written as 
$$\langle K^a, (x,x')\rangle = \langle W^a, x \rangle + \langle V^a, x' \rangle$$

By setting $W^a := - V^a$ we get
\begin{align}
\langle K^a, (x,x')\rangle &= \langle - V^a, x \rangle + \langle V^a, x' \rangle = \langle V^a, -x \rangle + \langle V^a, x' \rangle = \langle V^a, x' - x \rangle \, . \nonumber
\end{align}
Let $(x_\text{right}, x'_\text{right})$ be a pair of patches taken at the same location in subsequent images $s, s'$ where the agent is located at the center of the patch $x_\text{right}$ and move one step to the right in patch $x'_\text{right}$. The specific choice of transition $(s, s')$ and patch location will not matter, as long as the agent is centered in the patch $x_\text{right}$ and moved to the right in $x'_\text{right}$. We can select pairs $(x_\text{left}, x'_\text{left})$,  $(x_\text{up}, x'_\text{up})$ and $(x_\text{down}, x'_\text{down})$ analogously. Then, we can define $V^a$ as
\begin{align}
    V^\text{right} := x'_\text{right} - x_\text{right}, \quad V^\text{left} := x'_\text{left} - x_\text{left}, \quad V^\text{up} := x'_\text{up} - x_\text{up}, \quad V^\text{down} := x'_\text{down} - x_\text{down} \,. \nonumber
\end{align}
Note that the precise choice of patch localization for $x_a$ and $x'_a$ does not matter because in the end the resulting difference $x'_a - x_a$ will remain the same.

Now observe that at most two pixels change between any pair of patches $(x,x')$ taken from any feasible transition $(s,s')$. Also, every pixel change swaps the agent pixel and a background pixel or vice-versa (we ignore the goal pixel for simplicity), contributing an equal amount of $p$ to the magnitude $\lVert x' - x \rVert^2$, where $p$ is the squared norm of the difference between the ``agent pixel'' and the ``background pixel'' (a pixel is a  3-dimensional vector for RGB). We therefore always have 
$$\lVert x' - x \rVert \leq \sqrt{2p} \, .$$ 
Fix some action $a$. For all patch transition $(x, x')$, we have that $\langle V^a, x' - x \rangle \leq \lVert V^a\rVert \lVert x' - x \rVert$ by Cauchy-Schwarz inequality, which implies for all $(x, x')$ 
\begin{align*}
    \langle V^a, x' - x \rangle \leq \lVert V^a\rVert \lVert x' - x \rVert \leq \sqrt{2p}\sqrt{2p} = 2p \,.
\end{align*}
Note that $||x'_a - x_a|| = \sqrt{2p}$ since one pixel changed from background to agent and one pixel changed from agent to background. This means
$$\langle V^a, x'_a - x_a \rangle  = \langle x'_a - x_a, x'_a - x_a \rangle=\lVert x'_a - x_a\rVert^2=2p\, ,$$%
and thus $z_a := x'_a - x_a$ is the unique maximizer of $\langle V^a, z \rangle$ subject to $\lVert z\rVert \leq \sqrt{2p}$. Uniqueness can be seen as follows: Another maximizer $\tilde z$ would have to achieve $2p = \langle V^a, \tilde z \rangle \leq \rVert V^a \lVert \rVert \tilde z \lVert \leq 2p$ which means the Cauchy-Schwarz inequality is actually an equality and thus $V^a$ and $\tilde z$ would have to be linearly dependent (Cauchy-Schwarz) with equal norm (since $\rVert V^a \lVert \rVert \tilde z \lVert = 2p$) and equal direction, i.e. $\tilde z = x'_a - x_a$.

Consider a feasible transition $(s, a^*, s')$ and all the patch differences $x'_{i, j} - x_{i, j}$ taken from $(s, s')$ at location $(i,j)$. Since $x'_{a^*} - x_{a^*}$ will appear exactly at one location $(i^*, j^*)$ (the expert $\pi^*$ always moves and the agent is unique) and $\lVert x'_{i,j} - x_{i,j} \rVert \leq \sqrt{2p}$, we must have that 
$$
\langle V^{a^*}, x'_{i,j} - x_{i,j} \rangle < \langle V^{a^*}, x'_{i^*,j^*} - x_{i^*,j^*} \rangle\, \text{for every $(i, j) \neq (i^*, j^*)$} \, .
$$
Moreover, since no  other $x'_{a} - x_{a}$ with $a \not= a^*$ will appear in the image, we also have
$$
\langle V^{a}, x'_{i,j} - x_{i,j} \rangle < 2p = \langle V^{a^*}, x'_{i^*,j^*} - x_{i^*,j^*} \rangle\, \text{for every $i$, $j$ and action $a \neq a^*$.}
$$
In other words, the max operator will pick the correct location for $a^*$ (first inequality) and the kernel response for $V^{a^*}$ at $(i^*, j^*)$ will be the largest (second inequality).

Let the score (pre-softmax) be defined as
$z^a(s,s') := \max_{i,j} \langle K^a, (x_{i,j},x'_{i,j})\rangle\, \text{for all $a$}$. Since $\langle K^a, (x_{i,j},x'_{i,j})\rangle = \langle V^{a}, x'_{i,j} - x_{i,j} \rangle$, the above inequalities imply that $z^{a^*}(s, s') > z^a(s, s')$ for all $a \neq a^*$. With $\hat h(a|s, s') = \softmax(\tau^{-1}z(s, s'))$, we finally have that $\hat h(a|s, s') \to \mathbbm{1}(a = a^*) = h^*(a|s, s')$ as $\tau \to 0$.

\section{Experiments}
\subsection{Maze/grid Experiments}
\label{app:maze_exp}

\subsubsection{Architectures}\label{app:architectures}

\textbf{LC - Linear Classifier (Figure~\ref{fig:maze_complexity} \& \ref{fig:stoch_env}, with $\gS_\textnormal{pos}$).} It corresponds to $\hat\pi(a \mid s) = \softmax_a(Ws + b)$ where $W \in \sR^{4\times 2}$ and $b \in \sR^4$ and $\hat\idm(a \mid s, s') = \softmax_a(V(s, s') + c)$ were $V \in \sR^{4\times 4}$ and $b \in \sR^4$.

\textbf{5L MLP - Multilayer Perceptron (Figures~\ref{fig:maze_complexity}, \ref{fig:maze_goal}, \ref{fig:maze_stocha} \& \ref{fig:stoch_env}, with $\gS_\textnormal{pos}$).} It consists of a multilayer neural network with 5 hidden layers of 100 neurons with ReLU nonlinearities. The output is a softmax over the four possible actions. The input dimension depends on the setting: 2 inputs when used as a policy and 4 inputs when used as an IDM. In both cases, we add 2 input dimensions when goal conditioning (the goal is an $(x,y)$-location).

\textbf{1L CNN - One-layer convolutional neural network with global max pooling (Figure~\ref{fig:maze_complexity}, with $\gS_\textnormal{img}$).} It has one convolutional layer with a 3x3 kernel, mapping the input channels (3 input channels for $\hat\pi(a \mid s)$ and 6 input channels for $\hat\idm(a \mid s, s')$) to 4 channels (one per action) without padding nor stride. Next, the maximum is taken across pixels (reducing the width and height dimensions), which results in a 4-dimensional vector which is then fed into a softmax function to obtain a distribution over actions.

\textbf{5L CNN - 5-layer convolutional neural network (Figure~\ref{fig:maze_complexity}, with $\gS_\textnormal{img}$).} The number of input channels is 3 for $\hat\pi(a \mid s)$ and 6 for $\hat\idm(a \mid s, s')$. Three blocks of the form ``convolution-ReLU-MaxPool'' are applied, where the convolution has a 3x3 kernel with 128 output channels (padding=1, no stride) and the MaxPool layer has a 2x2 kernel (no padding, no stride). Then, two fully connected ReLU layers are applied, with 128 hidden units each. A final linear projection maps to a 4-dimensional vector, which is fed to a softmax to obtain a distribution over actions.

\subsubsection{Complexity experiments (Figures~\ref{fig:maze_complexity} \& \ref{fig:maze_goal}) -- Environments, datasets \& metrics}\label{app:maze_data}
The three mazes were randomly generated and solved using the \texttt{mazelab} library \citep{mazelab}. The action space is $\gA := \{\texttt{right}, \texttt{left}, \texttt{up}, \texttt{down}\}$ and executing one of these deterministically moves the player in the corresponding direction (by one unit) unless a wall prevents this move, in which case the player remains static. Note that the resulting expert $\expert$ is always deterministic, allowing us to write $\expert(s)$ to denote the action taken by $\expert$ in state $s$. Moreover, the environment dynamics is also deterministic, i.e. $p(s' \mid s, a) = \mathbbm{1}(s' = f(s, a))$ for some function $f$ where $\mathbbm{1}(\cdot)$ is the indicator function, which means the ground-truth video model $\video^*$ induced by the expert is also deterministic. Indeed, 
$$\video^*(s' \mid s) = \sum_a p(s' \mid s, a)\expert(a \mid s) = \sum_a \mathbbm{1}(s' = f(s, a))\mathbbm{1}(a = \expert(s)) = \mathbbm{1}(s' = f(s, \expert(s))) \,.$$
Define $\video^*(s) := f(s, \expert(s))$. 

With this notation setup, we have that the test set is given by 
$$\mathcal{D}_L^\text{test} := \{(s, \expert(s), \video^*(s)) \mid s \in \gS_\text{feasible}\} \,,$$
where $\gS_\text{feasible}$ is the set of feasible states. The training set $\gD^\text{train}_L$ is constructed by randomly sampling a fraction of the test set without replacement. In the goal conditioning experiment of Figure~\ref{fig:maze_goal}, the goal description $g$ is added as a feature.  

The \textbf{test accuracy} of a learned policy $\hat\pi$ is given by
\begin{align}
    \text{Acc}_\text{test}(\hat\pi) := \frac{1}{|\mathcal{D}_L^\text{test}|}\sum_{(s, a, s') \in \mathcal{D}_L^\text{test}} \mathbbm{1}(a = \hat\pi(s)) \, ,\nonumber
\end{align}
where $\hat\pi(s) := \argmax_{a}\hat\pi(a \mid s)$.

\begin{remark}
    In this deterministic setup, the IDM-based policy $\idmpolicy$ (Section~\ref{sec:connecting}) takes the particularly simple form $\idmpolicy(a \mid s) = \hat\idm(a \mid s, \video^*(s))$. Indeed,
$$\idmpolicy(a \mid s) = \sum_{s'} \hat\idm(a \mid s, s')\video^*(s' \mid s) = \sum_{s'} \hat\idm(a \mid s, s')\mathbbm{1}(s' = \video^*(s)) = \hat\idm(a \mid s, \video^*(s)) \, .$$
\end{remark}

\begin{remark}
    The test accuracy of the IDM $\hat\idm$ is the same as the accuracy of the IDM-based policy $\idmpolicy$. Indeed, by defining $\hat\idm(s, s') = \argmax_a \hat\idm(a \mid s, s')$, we get
\begin{align*}
\text{Acc}_\text{test}(\hat\idm) &:= \frac{1}{|\mathcal{D}_L^\text{test}|}\sum_{(s, a, s') \in \mathcal{D}_L^\text{test}} \mathbbm{1}(a = \hat\idm(s, s')) \\
&= \frac{1}{|\mathcal{D}_L^\text{test}|}\sum_{(s, a, s') \in \mathcal{D}_L^\text{test}} \mathbbm{1}(a = \hat\idm(s, \video^*(s))) \\
&= \frac{1}{|\mathcal{D}_L^\text{test}|}\sum_{(s, a, s') \in \mathcal{D}_L^\text{test}} \mathbbm{1}(a = \idmpolicy(s)) = \text{Acc}_\text{test}(\idmpolicy) \,.
\end{align*}
\end{remark}

\begin{remark}
    The definition and claims made above remain identical with goal conditioning, simply replace $s$ by $(s,g)$.
\end{remark}

\subsubsection{Stochastic expert experiment (Figure~\ref{fig:maze_stocha}): Environments, datasets \& metrics}
As explained in Section~\ref{sec:stochastic_expert}, we consider a simple 20x20 grid environment where the agent starts at the top left corner and must reach the bottom right corner. The action space is again $\gA := \{\texttt{right}, \texttt{left}, \texttt{up}, \texttt{down}\}$. There are no obstacles preventing movement, except for the borders of the grid. We consider three expert policies used to generate the training data. All three are optimal and consist in going right with probability $p(\texttt{right})$ and going down with probability $1 - p(\texttt{right})$, except when on the right or bottom border, where the policy deterministically chooses the only optimal action. We vary $p(\texttt{right})$ to study the effect of the stochasticity of the expert on the performance of BC and IDM labeling.

For each expert, we sample 26 full trajectories of length 38 (this is the number of actions required to solve the environment) for a total of 988 transitions $(s, a, s')$. These transitions form the dataset $\mathcal{D}^\text{full}$. The unlabeled dataset $\DU$ is obtained by discarding the action labels in $\mathcal{D}^\text{full}$. The action-labeled training set $\DL$ is formed by sampling a fraction of $\mathcal{D}_L^\text{full}$ without replacement (``train split'' corresponds to that fraction). Note that the IDM $\hat\idm$ is trained on $\DL$ while $\hat\pi$ is trained on $\DU$ via the IDM labeling objective from Section~\ref{sec:idm_labeling}.

The \textbf{average reward} measures the performance of a policy $\hat\pi$ by averaging the reward obtained over 25 episodes, each limited to 38 steps (the minimum necessary to reach the end goal). A reward of one is obtained if the goal is reached within 38 steps, otherwise the reward is 0.

\begin{remark}
    Note that accuracy is not a good metric to measure the performance of a policy $\hat\pi$ when the expert $\expert$ is stochastic. This is why we instead report the average reward.
\end{remark}

\subsubsection{Stochastic environment experiment (Figure~\ref{fig:stoch_env})}\label{app:stoch_maze_env}
In this set of experiments, we consider of variant of the 50x50 maze environment from Figure~\ref{fig:maze_complexity} where, at each time step, there is some probability $p(\texttt{no-op})$ that the agent will remain static, regardless of the action taken. The expert policy is the same as previously and is deterministic. This setting is interesting since the ground-truth IDM $\idm^*$ is not linear. Intuitively, this is because, when $s = s'$, the best thing $\idm^*$ can do is to do just like $\expert$, which is itself nonlinear. That being said, when $s \not= s'$, the linear classifier described in Appendix~\ref{app:explicit_form_idm_pos} works well. One can write the ground-truth IDM $\idm^*(a \mid s, s')$ as 
\begin{align*}
     \mathbf{1}(s = s') \expert(a \mid s) + \mathbf{1}(s \not= s') \softmax_a(\tau^{-1}W(\nobs - \obs)) \,,
\end{align*}
which approaches $\idm^*(a \mid s, s')$ when $\tau \rightarrow 0$. It is clear that this function is not a linear classifier as soon as $\pi^*$ is not. Experiments will nevertheless show that this function is sufficiently close to being linear to be well modeled by a linear classifier.

Figure~\ref{fig:stoch_env} shows again that the IDM-based policy is more sample efficient than the BC policy. Even if the ground-truth IDM is not linear, learning a linear IDM yields a very strong VM*-IDM policy, as long as the $p(\texttt{no-op}) < 1$. We hypothesize that this is due to the ground-truth IDM being linear on the region of the state space where $s \not= s'$, as explained above. The IDM implemented with an MLP is also able to leverage this near linearity, thanks to the simplicity bias of neural networks, which allows VM*-IDM to outperform BC. In the limit case where $p(\texttt{no-op}) = 1$, BC and VM*-IDM behave exactly the same since the linearity shortcut is never available to IDM since we always have $s = s'$.

\textbf{Accuracy computation.} We wish to compute the accuracy of the IDM-based policy $\idmpolicy$:
$$\text{Acc}(\idmpolicy) = \frac{1}{|\DL^\text{test}|} \sum_{(s, a) \in \mathcal{D}_L^\text{test}} \mathbf{1}(a = \argmax_{\hat a} \idmpolicy(\hat a \mid s))\,.$$
To achieve this, we first compute $\idmpolicy(a \mid s)$ explicitly using our knowledge of the environment. First, notice that 
\begin{align*}
    \video^*(s' \mid s) =p(\texttt{no-op}) \mathbf{1}(s' - s) + (1 - p(\texttt{no-op})) \mathbf{1}(s' - f(s, \expert(s)))\, ,
\end{align*}
where $f(s, a)$ is the next-state in the case where the action is actually realized. From this, we can compute
\begin{align*}
    \idmpolicy(a \mid s) &= \sum_{s'} \hat h(a \mid s, s') \video^*(s' \mid s) \\
    &= \sum_{s'} \hat h(a \mid s, s') [p(\texttt{no-op}) \mathbf{1}(s' - s) + (1 - p(\texttt{no-op})) \mathbf{1}(s' - f(s, \expert(s)))] \\
    &= p(\texttt{no-op})\sum_{s'} \hat h(a \mid s, s')\mathbf{1}(s' - s) + (1 - p(\texttt{no-op})) \sum_{s'} \hat h(a \mid s, s') \mathbf{1}(s' - f(s, \expert(s))) \\
    &= p(\texttt{no-op})\hat h(a \mid s, s) + (1 - p(\texttt{no-op})) \hat h(a \mid s, f(s, \expert(s))) \,.
\end{align*}
The accuracy is computed using this expression for $\idmpolicy(a \mid s)$.

\begin{figure}
    \centering
    \includegraphics[width=0.5\linewidth]{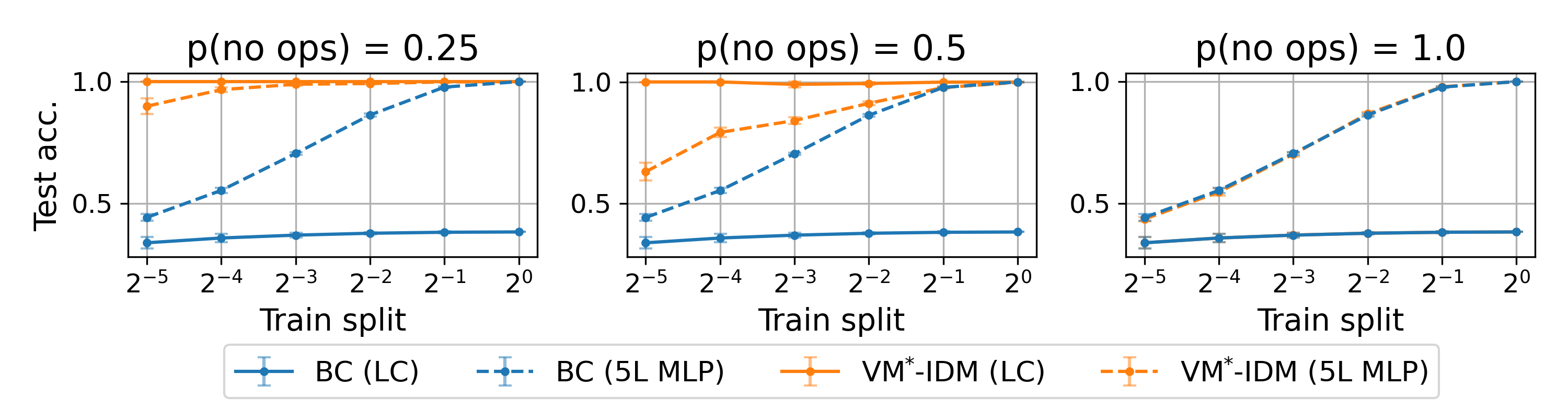}
    \caption{\textbf{Varying the stochasticity of the environment.} 50x50 maze from Figure~\ref{fig:maze_complexity} with deterministic expert and stochastic environment: the agent remains static with probability $p(\texttt{no-ops})$. Comparing the test accuracy of the of BC policy $\bcpolicy$ and the VM*-IDM policy $\idmpolicy$. Averaging over 5 seeds.}
    \label{fig:stoch_env}
\end{figure}

\subsubsection{Optimization}
In all maze/grid experiments (Figures~\ref{fig:maze_complexity}, \ref{fig:maze_goal} \& \ref{fig:maze_stocha}), we train using the Adam optimizer. To train LC and 5L MLP, we use a learning rate of 1e-3, while we use 1e-4 for 1L CNN and 5L CNN. For the CNN architecture we use a batch size of $\min\{32, |\DL|\}$ while we use a batch size of $|\DL|$ for the MLP experiments, except for the policies of Figure~\ref{fig:maze_stocha}, where we used a batch size of $\min\{512, |\DL|\}$ for BC and $\min\{512, |\DU|\}$ for IDM Labeling.

\subsection{ProcGen Experiments}
\subsubsection{Training Details}
\label{app:procgen}

In ProcGen environments, $\obs$ and $\nobs$ are RGB frames and $\act$ is a discrete action. We use the data and learning setup from LAPO~\cite{schmidt2024learning}, relying on IMPALA-CNNs~\cite{espeholt2018impala} followed by a fully-connected action decoder with hidden sizes (128, 128) to implement BC, IDM labeling, LAPO and LAPO+. We still train the action decoder (end-to-end with the backbone) during BC and IDM labeling for parity. We use a learning rate of 2e-4 (except for LAPO/LAPO+ stage 1 where we use 3e-4), a piecewise linear schedule, a batch size of $\min\{128, |\DL|\}$ (or 128 if only $\DU$ is used) and the Adam optimizer. To ensure a fair comparison, we make sure the total number of training steps across all stages sums to 120,000 for all methods. Specifically, we use 120,000 steps for BC, 60,000/60,000 steps for the two stages of IDM labeling, 50,000/60,000/10,000 steps for the three LAPO stages, and 50,000/10,000/60,000 steps for the three LAPO+ stages. During policy learning stages, we rollout the policy every 500 steps and average the return over 64 episodes, keeping the maximum episodic return achieved during training for Figure~\ref{fig:lapo_procgen}.

For latent action policies, we do not retrain LAPO stage 1 for each seed and instead use the same stage 1 models (one for each environment) for all downstream LAPO and LAPO+ runs. We reuse the quantization hyperparameters from \cite{schmidt2024learning} for LAPO stage 1. While LAPO originally included one additional frame of pre-transition context for the LIDM, we do not include any pre-transition context to simplify the comparison between IDM-based methods and BC.

\subsubsection{Complexity Classification}
\label{app:procgen_complexity}

We consider the \texttt{bigfish}, \texttt{bossfight}, \texttt{dodgeball}, \texttt{fruitbot}, \texttt{heist}, \texttt{maze}, \texttt{miner}, \texttt{plunder} and \texttt{starpilot} environments as having simple dynamics. We consider the following environments as complex: \texttt{caveflyer} and \texttt{chaser} because of momentum (agent does not stop immediately given no inputs), \texttt{climber}, \texttt{coinrun}, \texttt{jumper} and \texttt{ninja} because of gravity, and \texttt{leaper} because of moving platforms.

\subsection{Manipulation Experiments}
\label{app:manipulation}
For Push-T and LIBERO-10, $\obs$ and $\nobs$ include four RGB frames each and $\act$ is a block of 16 continuous action vectors. We use the UVA codebase~\cite{uva}. UVA is initialized with the pretrained image generation model MAR-B~\cite{mar} and encodes all images using a frozen VAE KL-16~\cite{rombach2022high}. The architecture relies on a transformer backbone and two separate diffusion heads for video generation and action generation. We refer to the original paper for details. The UVA architecture implements the following task modes through input masking and selective output decoding: policy, VM, FDM, IDM and a \textit{full dynamics model} (i.e., $p(\act, \nobs | \obs)$). We modified the UVA masking procedure to include all current frames $\obs$ in the IDM to put it on par with the policy. For a given batch during training, UVA uniformly samples a task, applies masking to the irrelevant inputs, and computes the loss for the sampled task. We use a learning rate of 2e-5, a batch size of 128, and train all models for 50,000 steps using the AdamW optimizer on nodes with 4xH100 GPUs.

We pretrain UVA in VM mode only on $\DU$ for each environment and use this as an initialization for all methods, per the UVA training recommendations. As such, all methods in this section perform SSIL via their initialization. We then disable the other task modes to learn single functions and perform BC or IDM learning on $\DL$. Next, we use the same IDMs for IDM labeling (again starting from the VM initialization and fitting UVA in policy mode on the IDM-labeled $\DU$) or VM-IDM (pairing the IDMs with the initial VM). We also train standard UVA with multitasking on $\DL$ and show the performance of the two sampling paths Policy (UVA) and Video-IDM (UVA).

We experienced relatively high variance when training with diffusion heads on LIBERO-10 (perhaps due to the limited size of $\DL$) and chose to include results of a UVA variant with a linear action head due to its overall stronger performance on this benchmark. We use a \texttt{SmoothL1} loss with $\beta=1.0$ in this case.

\end{document}